\documentclass{article}
\usepackage{iclr2025_conference}

\usepackage[T1]{fontenc}
\usepackage[utf8]{inputenc}
\usepackage{microtype}
\usepackage{latexsym}
\usepackage{graphicx}
\usepackage{marvosym}
\usepackage{url}
\usepackage{booktabs}
\usepackage{amsmath,amssymb,amsfonts,amsthm}
\usepackage{nicefrac}
\usepackage{xcolor}
\usepackage[ruled,vlined]{algorithm2e}
\SetKwInput{KwInput}{Input}
\SetKwInput{KwOutput}{Output}
\usepackage{subcaption}
\usepackage{wrapfig}
\usepackage{bbm}
\usepackage{bm}
\usepackage{multirow}
\usepackage[table]{xcolor}
\usepackage{enumitem}
\usepackage{tcolorbox}
\tcbuselibrary{breakable}

\definecolor{uclablue}{rgb}{0.15, 0.45, 0.68}
\hypersetup{
  breaklinks,
  citecolor=uclablue,
  colorlinks=true,
}

\tcbset{
  example/.style={
    breakable,
    colback=white,
    colframe=black!55,
    colbacktitle=black!6,
    coltitle=black,
    fonttitle=\bfseries,
    boxrule=0.5pt,
    arc=1pt,
    left=5pt,
    right=5pt,
    top=5pt,
    bottom=5pt,
    before skip=6pt,
    after skip=6pt
  }
}

\newtheorem{proposition}{Proposition}

\title{TRACE: A Unified Rollout Budget Allocation Framework for Efficient Agentic Reinforcement Learning}

\author{
  \textbf{Heming Zou\textsuperscript{1,2}\thanks{Work completed during an internship at Tencent.}},
  \textbf{Qi Wang\textsuperscript{1}\thanks{Corresponding authors.}},
  \textbf{Yun Qu\textsuperscript{1,2}\footnotemark[1]},
  \textbf{Yuhang Jiang\textsuperscript{1}},
  \textbf{Lizhou Cai\textsuperscript{1}},
  \textbf{Yixiu Mao\textsuperscript{1}},\\
  \textbf{Ru Peng\textsuperscript{2}\footnotemark[1]},
  \textbf{Xin Xu\textsuperscript{2}},
  \textbf{Weijie Liu\textsuperscript{2}},
  \textbf{Kai Yang\textsuperscript{2}},
  \textbf{Saiyong Yang\textsuperscript{2}\footnotemark[2]},
  \textbf{Xiangyang Ji\textsuperscript{1}\footnotemark[2]}\\[2ex]
  \textbf{$^1$Tsinghua University} \quad
  \textbf{$^2$LLM Department, Tencent}\\[2ex]
  \centerline{%
  \begin{tabular}{c}
    \Letter\ \href{mailto:cheemswang@mail.tsinghua.edu.cn}{cheemswang@mail.tsinghua.edu.cn} \\[0.15ex]
    \href{mailto:stevesyang@tencent.com}{stevesyang@tencent.com} \\[0.15ex]
    \href{mailto:xyji@tsinghua.edu.cn}{xyji@tsinghua.edu.cn} \\
  \end{tabular}%
  }
}

\begin{document}
\maketitle

\begin{abstract}
  Reinforcement learning with verifiable rewards (RLVR) is a promising approach for enhancing reasoning and agentic behavior in large language models. However, rollout-intensive policy optimization is often limited by insufficient reward contrast, arising when overly simple or complex prompts generate low-variance feedback and when outcome-only rewards assign the same terminal assessment to every decision in a multi-turn rollout. Past efforts have focused on allocating available rollout resources to promising prompts, yet they only leverage sample informativeness at the prompt level and neglect variation in prefix-level informativeness across turns within the same rollout. This work targets multi-turn agentic RL by modeling each ReAct-style thought-action-observation turn as a semantically distinct node, allowing budget allocation to extend from prompt roots to turn-level prefixes with further continuations, which naturally forms tree-structured rollouts. We introduce Tree Rollout Allocation for Contrastive Exploration (TRACE), a unified rollout allocation framework that enhances reward contrast within a fixed sampling budget. Technically, TRACE allocates rollout budget to both prompt roots and intermediate prefixes that are most likely to yield mixed terminal rewards. A shared generalizable predictor estimates conditional success probability at these anchors from prefix histories to guide this allocation. The resulting adaptive tree structure enriches outcome-only feedback and amplifies the policy-update signal. Empirically, TRACE achieves competitive performance and efficiency gains on typical agentic benchmarks, e.g., improving Qwen3-14B Multi-Hop QA average accuracy by 2.8 points over competitive baselines at equal sampling cost.
\end{abstract}

\section{Introduction}
\label{sec:intro}

Reinforcement learning with verifiable rewards (RLVR) has become a key mechanism for incentivizing reasoning and agentic behavior in large language models (LLMs), using verifiable feedback to shape exploration over solution paths and environment interactions~\citep{jaech2024openai,guo2025deepseek}. Despite its success, RLVR is computationally expensive because rollouts require long Chain-of-Thought (CoT)~\citep{wei2022chain} generation for reasoning and interleaved environment interactions for agentic tasks. Deciding where to spend limited rollout budget is therefore a central design problem~\citep{zheng2025act,lin2025cppo}.

In practice, most scalable RLVR pipelines rely on outcome rewards rather than process rewards, avoiding hand-crafted intermediate reward rules that are vulnerable to reward hacking~\citep{shao2024deepseekmath}. The learning signal provided by such rollouts, however, is highly uneven in informativeness and sparse for credit assignment. At the prompt level, overly easy or hard instances tend to produce low-variance outcome rewards~\citep{yu2025dapo}. At the trajectory level, assigning a single terminal reward to a long rollout leaves little local contrast for credit assignment across generated tokens and agent turns~\citep{lightman2024let}. Prior work alleviates prompt-level inefficiency through prompt selection~\citep{zheng2025act,gao2025prompt,bae2026online}, which decides which prompts to sample, and rollout allocation~\citep{li2025knapsack}, which decides how many rollouts each prompt receives. Together, they can be unified as scoring a candidate set and assigning rollout budget toward prompts likely to produce contrastive outcome variation: a prompt assigned zero rollouts is skipped, whereas a selected prompt assigned a larger positive budget receives more independent root rollouts. Yet this allocation strategy acts only at the prompt root. Once a prompt receives budget, each rollout is still generated as an atomic trajectory.

Multi-turn agentic RL exposes a finer allocation unit, the turn. While single-turn reasoning can be segmented at token or sentence boundaries, ReAct-style~\citep{yao2022react} interaction naturally packages each decision as a thought-action-observation node. A complete rollout therefore exposes semantically meaningful prefixes that can be revisited for branching. These visited prefixes become candidate branching points for extra exploration. Some prefixes are unlikely to produce new outcome variation, whereas others are likely to yield counterfactual continuations with different final rewards. Technically, allocating budget at this level is not a matter of sampling more independent trajectories; it is a choice of which prefix occurrences should receive additional continuation branches.

\begin{wrapfigure}{r}{0.5\textwidth}
  \centering
  \vspace{-10pt}
  \includegraphics[width=\linewidth]{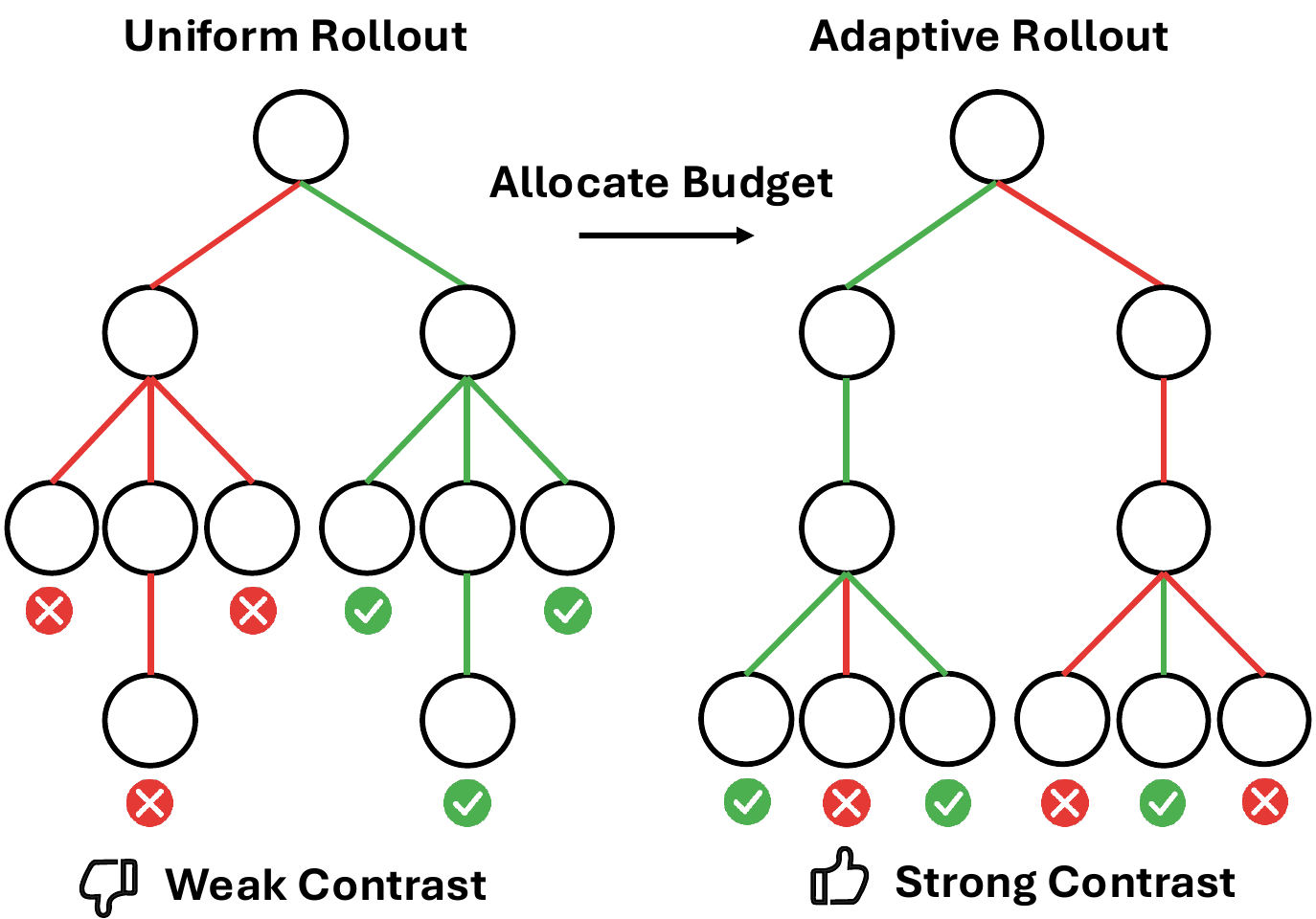}
  \caption{TRACE redirects a fixed rollout budget toward contrast-rich roots and prefixes, converting scalable outcome rewards into denser mixed-reward contrast and implicit stepwise preference pairs than uniform allocation.}
  \label{fig:trace_intuition}
  \vspace{-5pt}
\end{wrapfigure}

The above viewpoint naturally turns flat rollout collection into tree-structured rollouts. Prompt roots are depth-zero anchors and internal prefixes are non-root turn-level anchors, so root-level allocation becomes a special case of allocating budget over tree nodes. The shared principle is \emph{mixed-reward contrast construction}: allocate rollout budget to anchors whose descendant sets are most likely to contain both successful and failed outcomes. At the root, this principle unifies prompt filtering and rollout-count allocation, with zero budget rejecting a prompt and positive group budget setting its root rollout count. Inside an active prompt, the same principle allocates extra branches to prefixes likely to produce opposite-outcome siblings.

Figure~\ref{fig:trace_intuition} summarizes this view: under a fixed rollout budget, allocating branches to contrast-rich roots and prefixes yields more mixed terminal outcomes than uniform allocation over the rollout tree. We instantiate this idea as \textbf{T}ree \textbf{R}ollout \textbf{A}llocation for \textbf{C}ontrastive \textbf{E}xploration (\textbf{TRACE}), a unified rollout budget allocation framework for efficient RLVR in multi-turn agentic tasks. TRACE first performs global root allocation over a large candidate pool, then performs local tree expansion over the prefix occurrences exposed by the resulting rollouts. Both stages are guided by a shared generalizable predictor that estimates conditional success probability from prefix histories. The resulting rollout trees turn scalable outcome rewards into implicit stepwise preference pairs for tree-aware policy optimization. Across three agentic settings: Mathematical Reasoning, Multi-Hop QA, and Function Calling, TRACE improves accuracy over uniform sampling under the same rollout budget by directing samples toward anchors with useful reward contrast.

In summary, this work's contribution is three-fold:
\begin{enumerate}[topsep=0pt,itemsep=0ex,leftmargin=3ex]
    \item We formulate \textbf{mixed-reward contrast allocation} as a unified view of sample-efficient RLVR, incorporating prompt filtering, rollout-count allocation, and prefix-level branching as strategic budget decisions over rollout-tree anchors.
    \item We derive \textbf{root and prefix allocation utilities} based on a shared descendant-set objective, distributing budget according to the likelihood that an anchor's descendants will yield both successful and failed outcomes.
    \item We instantiate this view in \textbf{TRACE}, a predictor-guided, budget-constrained tree rollout framework that integrates global root allocation with local prefix expansion, enhancing implicit credit signals for sample-efficient policy optimization.
\end{enumerate}

\section{Preliminaries}
\label{sec:prelim}

\subsection{Multi-Turn RLVR as Prefix Histories}
\label{sec:react}

This work investigates multi-turn agentic reinforcement learning within the ReAct framework~\citep{yao2022react}. 
We define the input prompt as the root node $x$ of the rollout tree. 
At each turn $t=1,\ldots,T$, the model produces a thought $\tau_t$ and an action $a_t$, to which the environment responds with an output $o_t$.
We then encapsulate the complete turn as $n_t := \langle \tau_t, a_t, o_t \rangle$. 
The history available after turn $t$ is described as
\begin{equation}
  \mathcal{H}_t := (x, n_1, \ldots, n_t),
  \qquad
  \mathcal{H}_0 := x.
  \label{eq:history}
\end{equation}
With policy model $\pi_\theta$ and environment transition $P_{\mathrm{env}}$, the rollout dynamics can be expressed as the following hierarchical generative process:
\begin{equation}
  (\tau_t, a_t) \sim \pi_\theta(\cdot \mid \mathcal{H}_{t-1}),
  \qquad
  o_t \sim P_{\mathrm{env}}(\cdot \mid \mathcal{H}_{t-1}, \tau_t, a_t).
  \label{eq:react_rollout}
\end{equation}
A complete rollout is identified by its terminal history $\mathcal{H}_T$, and the RLVR framework assigns a binary terminal reward $r(\mathcal{H}_T)\in\{0,1\}$. 
Thus, the underlying training objective is formulated as
\begin{equation}
  \max_{\theta}
  \; \mathbb{E}_{x \sim \mathcal{D},\, \mathcal{H}_T \sim \mathbb{P}_{\pi_\theta}(\cdot \mid x)}\big[r(\mathcal{H}_T)\big].
  \label{eq:rlvr_objective}
\end{equation}
For any prefix $\mathcal{H}_t$, we define the conditional success probability of solving the prompt as
\begin{equation}
  V_t^\pi := \mathbb{E}_{\pi}\big[r(\mathcal{H}_T) \mid \mathcal{H}_t\big].
  \label{eq:prefix_value}
\end{equation}
A prompt is the shortest history $\mathcal{H}_0$, while later prefixes add the thoughts, actions, and observations already produced during the rollout.

\subsection{Initialize-then-Expand Rollout Trees}
\label{sec:trace_tree_construction}

To remain suited for parallelized LLM inference engines, tree rollout methods commonly follow an initialize-then-expand construction~\citep{ji2025tree,hou2025treerl}. After an allocation rule assigns each prompt $x_i$ a root rollout count $m_i$, Stage~1 samples complete bare rollouts. For rollout $j$, let $T_{i,j}$ be its terminal turn index and let $\mathcal{H}_{i,j,t}$ denote its prefix after turn $t$:
\begin{equation}
  \mathcal{H}_{i,j,T_{i,j}}\sim \pi_\theta(\cdot\mid x_i),
  \; j=1,\ldots,m_i,
  \label{eq:bare_rollout_generation}
\end{equation}
and records terminal rewards $r_{i,j}$.
Setting $m_i=0$ skips prompt $x_i$ in Stage~1, so Eq.~\eqref{eq:bare_rollout_generation} is not executed and no bare rollout index $j$ is defined.
Each visited prefix $\mathcal{H}_{i,j,t}$ can then serve as a branching anchor: if prefix expansion assigns $K_{i,j,t}$ continuations to it, Stage~2 samples terminal suffixes
\begin{equation}
  S_{i,j,t}^{(b)}
  \sim
  \pi_\theta(\cdot\mid \mathcal{H}_{i,j,t}),
  \; b=1,\ldots,K_{i,j,t}.
  \label{eq:prefix_branch_generation}
\end{equation}
The original suffix is the factual branch under the prefix, while newly sampled suffixes $S_{i,j,t}^{(b)}$ are counterfactual branches. This abstraction makes prompt roots and internal prefixes comparable budget-allocation anchors: both are conditioning contexts with terminal descendants below them.
For an active prompt $x_i$, the corresponding nonterminal prefix-anchor index set is
\begin{equation}
  \mathcal{A}_i:=\{(j,t):1\le j\le m_i,\;1\le t<T_{i,j}\},
  \label{eq:prefix_anchor_set}
\end{equation}
which excludes terminal leaves and records prefix occurrences available for continuation branching.
For a visited prefix occurrence $(j,t)\in\mathcal{A}_i$, setting $K_{i,j,t}=0$ skips Stage~2 branching at that anchor, so Eq.~\eqref{eq:prefix_branch_generation} is not executed and no continuation index $b$ is defined.

\section{Rollout Allocation through the Lens of Contrast Construction}
\label{sec:contrast_allocation}

In outcome-reward RLVR~\citep{guo2025deepseek}, more rollouts are not automatically more informative. A broad line of process-supervision, preference-learning, and tree-based RL methods reflects the same lesson: learning becomes denser when the model can compare alternatives rather than consume isolated terminal labels~\citep{lightman2024let,lai2024step,zhang2024rest}. If all descendants of an anchor receive the same terminal reward, group-relative or tree-aware updates have little local preference signal~\citep{shao2024deepseekmath,yang2025treerpo}: sampled continuations do not tell the optimizer which decisions should be preferred under the same conditioning context. When rollouts sharing the same prompt root or prefix disagree in terminal outcome while only the continuation differs, they yield implicit pairwise preferences over sibling continuations rather than a single sparse terminal reward shared across the entire rollout. We therefore treat mixed-reward contrast construction as the allocation objective: a fixed rollout budget is useful to the extent that it induces such pairwise comparison at prompt roots and visited prefixes.

\begin{figure*}[t]
  \centering
  \begin{subfigure}[t]{0.32\textwidth}
    \centering
    \includegraphics[width=\linewidth]{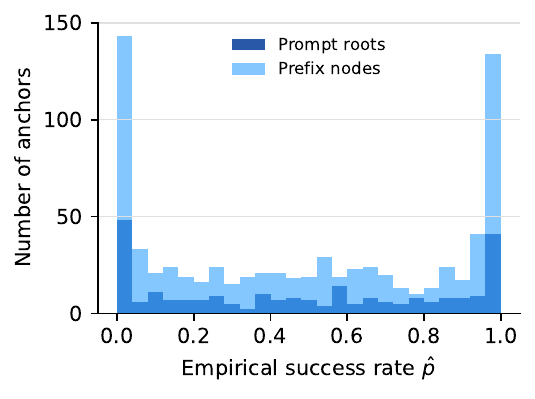}
    \caption{Success-rate heterogeneity}
    \label{fig:contrast_heterogeneity}
  \end{subfigure}
  \hfill
  \begin{subfigure}[t]{0.32\textwidth}
    \centering
    \includegraphics[width=\linewidth]{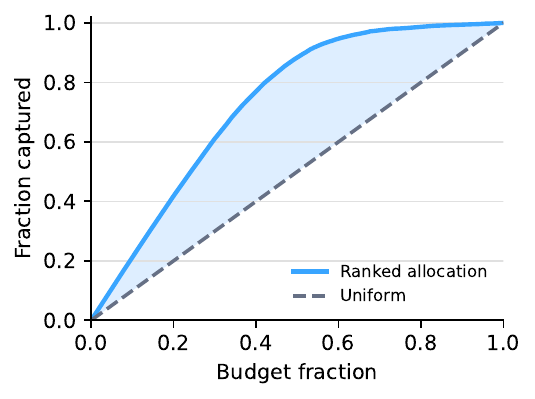}
    \caption{Contrast captured by budget}
    \label{fig:allocation_efficiency}
  \end{subfigure}
  \hfill
  \begin{subfigure}[t]{0.32\textwidth}
    \centering
    \includegraphics[width=\linewidth]{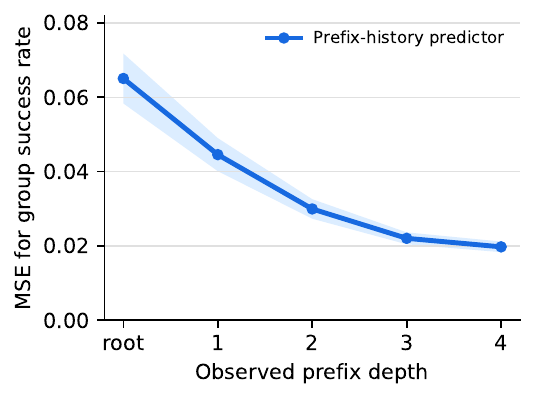}
    \caption{Prefix-conditioned prediction}
    \label{fig:prefix_prediction}
  \end{subfigure}
  \caption{\textbf{Contrastive-allocation diagnostics.} Using rollout results collected over several steps with Qwen3-8B under the Multi-Hop QA (HotpotQA) tree-sampling setting, panel (a) shows that many prompt-root and prefix anchors have empirical success rate $\hat p$ near $0$ or $1$, where outcome contrast is scarce. Panel (b) measures each anchor's pair contrast as $\hat p_h(1-\hat p_h)$; the $x$-axis is the fraction of rollout budget assigned to the top-ranked anchors, and the $y$-axis is the fraction of total pair contrast $\sum_h \hat p_h(1-\hat p_h)$ covered by those anchors, compared with uniform allocation. Panel (c) evaluates the same prefix-conditioned predictor at different observed prefix depths, showing lower group-success MSE at deeper depths and more reliable prefix-conditioned allocation.}
  \label{fig:contrast_allocation}
  \vspace{-5pt}
\end{figure*}

\subsection{Prefix Information for Better Difficulty Prediction}
\label{sec:prefix_prediction}

Prior sample-efficient RLVR methods use prompt-level difficulty prediction to filter prompts and allocate root rollouts~\citep{zheng2025act,gao2025prompt,bae2026online,li2025knapsack}. This motivates the pre-sampling prediction problem at prefix anchors in our tree setting: just as prompt roots are scored before Stage~1 rollout allocation, visited prefixes should be scored before Stage~2 continuation allocation. Each prefix history $\mathcal{H}_t$ refines the prompt root with observed interaction, so it conditions the same conditional success probability $V_t^\pi$ on richer state information and supports sharper forecasts of group outcome variation and downstream reward contrast.

\begin{proposition}[Prefix information improves group difficulty prediction]
\label{prop:prefix_predictability}
For $m\ge1$, let $\bar R_{t,m}$ be the average terminal reward of $m$ independent continuations sampled from prefix $\mathcal{H}_t$. For any squared-loss predictor $f$ that maps a history to a scalar estimate, define $\mathcal{E}_{t,m}^\star:=\inf_f \mathbb{E}[(\bar R_{t,m}-f(\mathcal{H}_t))^2]$. Then $\mathbb{E}[\bar R_{t,m}\mid\mathcal{H}_t]=V_t^\pi$, and for any $t<T$,
\begin{equation}
  \mathcal{E}_{t+1,m}^\star
  \le
  \mathcal{E}_{t,m}^\star .
\end{equation}
Consequently, $\mathcal{E}_{t,m}^\star\le\mathcal{E}_{0,m}^\star$ for all $t$.
The cases $t=0$ and $m=1$ recover prompt-level difficulty prediction and single-rollout reward prediction, respectively.
\end{proposition}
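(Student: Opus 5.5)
The argument is the standard fact that the Bayes-optimal squared-loss risk cannot increase when the conditioning information grows. It needs some care because $\bar R_{t,m}$ and $\bar R_{t+1,m}$ are different random variables. We work on a probability space carrying one full rollout $\mathcal{H}_T\sim\mathbb{P}_{\pi}(\cdot\mid x)$, with $x\sim\mathcal{D}$, together with the auxiliary continuations. The $m$ continuations from $\mathcal{H}_t$ are drawn i.i.d.\ from $\mathbb{P}_\pi(\cdot\mid\mathcal{H}_t)$, conditionally independent of everything else given $\mathcal{H}_t$.

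\textbf{Step 1: the infimum is attained at $V_t^\pi$.} Each continuation reward has conditional mean $\mathbb{E}[r\mid\mathcal{H}_t]=V_t^\pi$ by Eq.~\eqref{eq:prefix_value}, so linearity gives $\mathbb{E}[\bar R_{t,m}\mid\mathcal{H}_t]=V_t^\pi$. The $L^2$ projection theorem then shows that the infimum over measurable $f$ is attained at $f(\mathcal{H}_t)=V_t^\pi$.

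\textbf{Step 2: closed form for the optimal risk.} The continuation rewards are conditionally i.i.d.\ Bernoulli$(V_t^\pi)$ given $\mathcal{H}_t$, so
\begin{equation*}
\mathcal{E}_{t,m}^\star=\mathbb{E}\big[\mathrm{Var}(\bar R_{t,m}\mid\mathcal{H}_t)\big]=\tfrac1m\,\mathbb{E}\big[V_t^\pi(1-V_t^\pi)\big].
\end{equation*}
This reduces the claim to showing $\mathbb{E}[V_{t+1}(1-V_{t+1})]\le\mathbb{E}[V_t(1-V_t)]$. Here $V_{t+1}$ is evaluated at the prefix $\mathcal{H}_{t+1}$, which extends $\mathcal{H}_t$ by one turn $n_{t+1}$ drawn according to Eq.~\eqref{eq:react_rollout}.

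\textbf{Step 3: martingale property and Jensen.} By the tower property over the hierarchical process, $\mathbb{E}[V_{t+1}^\pi\mid\mathcal{H}_t]=V_t^\pi$. Since $v\mapsto v(1-v)$ is concave, conditional Jensen gives $\mathbb{E}[V_{t+1}(1-V_{t+1})\mid\mathcal{H}_t]\le V_t(1-V_t)$. Taking expectations proves $\mathcal{E}_{t+1,m}^\star\le\mathcal{E}_{t,m}^\star$, and induction on $t$ gives $\mathcal{E}_{t,m}^\star\le\mathcal{E}_{0,m}^\star$. The gap equals $\tfrac1m\mathbb{E}[\mathrm{Var}(V_{t+1}\mid\mathcal{H}_t)]$, which quantifies the information contributed by turn $t+1$.

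\textbf{Special cases.} Setting $t=0$ gives $\mathcal{H}_0=x$, which is prompt-level difficulty prediction. Setting $m=1$ gives $\bar R_{t,1}=r$, which is single-rollout reward prediction.

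\textbf{Main obstacle.} The main difficulty is making the comparison across $t$ meaningful. The two risks concern different targets, so the simple argument that more information means lower risk for the same target does not apply directly. The closed form in Step 2 resolves this by expressing both risks through the common martingale $(V_t^\pi)_t$. For this we need the variable-length horizon handled as a stopped process: after termination set $V_{t}=r(\mathcal{H}_T)$, so the martingale identity still holds. We also need a measurability assumption so that the conditional laws are well defined.
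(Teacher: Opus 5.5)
Your proof is correct and takes essentially the same route as the paper: you reduce $\mathcal{E}_{t,m}^\star$ to the closed form $\frac{1}{m}\mathbb{E}[V_t^\pi(1-V_t^\pi)]$ via the Bayes-optimal predictor $V_t^\pi$, then use the martingale property $\mathbb{E}[V_{t+1}^\pi\mid\mathcal{H}_t]=V_t^\pi$ to compare consecutive steps. The paper cites the law of total variance where you cite conditional Jensen for the concave map $v\mapsto v(1-v)$; for this quadratic they are the same identity and give the same gap $\frac{1}{m}\mathbb{E}[\mathrm{Var}(V_{t+1}^\pi\mid\mathcal{H}_t)]$, and your stopped-process treatment of variable horizons adds precision the paper leaves implicit.
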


Intuitively, Proposition~\ref{prop:prefix_predictability} asks how well one can forecast the average terminal reward of $m$ fresh continuations sampled after a history $\mathcal{H}_t$. The optimal mean-squared error $\mathcal{E}_{t,m}^\star$ measures this predictability using all information in $\mathcal{H}_t$, with target $\mathbb{E}[\bar R_{t,m}\mid\mathcal{H}_t]=V_t^\pi$. Because $\mathcal{H}_{t+1}$ extends $\mathcal{H}_t$ with one additional turn, prediction cannot worsen as more interaction is observed ($\mathcal{E}_{t+1,m}^\star\le \mathcal{E}_{t,m}^\star$), so prefix-level scoring is at least as informative as prompt-only scoring and strictly better once intermediate turns are available. This justifies scoring visited prefixes before Stage~2 continuation allocation. We attach the corresponding proof in Appendix~\ref{sec:prefix_predictability_proof}.

\subsection{A Shared Mixed-Outcome Principle for Roots and Prefixes}
\label{sec:mixed_reward_utility}

Given such root- and prefix-level predictions, the allocation target is the same at both scales: at a prompt root, the allocator asks whether fresh rollouts will form a diverse outcome group; at a visited prefix, it asks whether additional continuations from the same history can disagree with the factual suffix. This gives a single contrast-seeking rule: anchors near deterministic success or failure provide little contrast, while anchors with intermediate conditional success probability are more likely to expose both successful and failed descendants.

This prefix-level view explains how outcome-only rewards can still induce local credit. Shared-prefix branches keep the past fixed and vary only the future continuation, so opposite terminal outcomes become a local comparison for tree-aware optimizers. From a martingale view of conditional success, this remaining contrast has a simple form: each additional turn revises the forecast of terminal success, and the squared forecast revisions accumulate the remaining uncertainty along the suffix rather than only at the terminal reward.

\begin{proposition}[Prefix Uncertainty as Remaining Contrast Potential]
\label{prop:energy_identity}
Let $\mathcal{F}_t:=\sigma(\mathcal{H}_t)$, $Z_t:=V_t^\pi=\mathbb{E}_{\pi}[r(\mathcal{H}_T)\mid\mathcal{F}_t]$, and let $[Z]_{t:T}:=\sum_{s=t}^{T-1}(Z_{s+1}-Z_s)^2$ denote the quadratic variation of the conditional success forecast along the future rollout suffix. For any prefix $\mathcal{H}_t$,
\begin{equation}
  \mathbb{E}_{\pi}\left[[Z]_{t:T}\mid \mathcal{F}_t\right]
  = V_t^\pi\bigl(1-V_t^\pi\bigr).
  \label{eq:energy_identity}
\end{equation}
\end{proposition}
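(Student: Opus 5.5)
The plan is to treat $(Z_s)_{s=t}^{T}$ as a Doob martingale with respect to the filtration $(\mathcal{F}_s)$ and use orthogonality of its increments. By the tower property, $\mathbb{E}_\pi[Z_{s+1}\mid\mathcal{F}_s]=\mathbb{E}_\pi\bigl[\mathbb{E}_\pi[r(\mathcal{H}_T)\mid\mathcal{F}_{s+1}]\mid\mathcal{F}_s\bigr]=Z_s$, since $\mathcal{F}_s\subseteq\mathcal{F}_{s+1}$. This holds because $\mathcal{H}_{s+1}$ extends $\mathcal{H}_s$, so $\sigma(\mathcal{H}_s)\subseteq\sigma(\mathcal{H}_{s+1})$. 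The two endpoints are then identified: $Z_t=V_t^\pi$ by definition, and $Z_T=r(\mathcal{H}_T)$ because the terminal reward is $\mathcal{F}_T$-measurable.

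The key step is the conditional Pythagorean identity. Write $D_{s+1}:=Z_{s+1}-Z_s$, so that $Z_T-Z_t=\sum_{s=t}^{T-1}D_{s+1}$. Squaring gives diagonal terms $D_{s+1}^2$ and cross terms $D_{u+1}D_{s+1}$ with $u<s$. Conditioning on $\mathcal{F}_s$ kills each cross term: $D_{u+1}$ is $\mathcal{F}_s$-measurable and $\mathbb{E}_\pi[D_{s+1}\mid\mathcal{F}_s]=0$. Taking $\mathbb{E}_\pi[\cdot\mid\mathcal{F}_t]$ and using the tower property then yields
\begin{equation*}
  \mathbb{E}_\pi\bigl[[Z]_{t:T}\mid\mathcal{F}_t\bigr]=\mathbb{E}_\pi\bigl[(Z_T-Z_t)^2\mid\mathcal{F}_t\bigr].
\end{equation*}
All quantities lie in $[0,1]$, so integrability is automatic and no further justification is needed.

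The final step evaluates the right-hand side. It equals $\mathbb{E}_\pi[Z_T^2\mid\mathcal{F}_t]-Z_t^2$, again by the martingale property. Because $r\in\{0,1\}$, we have $Z_T^2=Z_T$, so $\mathbb{E}_\pi[Z_T^2\mid\mathcal{F}_t]=Z_t=V_t^\pi$. This gives $V_t^\pi-(V_t^\pi)^2=V_t^\pi(1-V_t^\pi)$, as claimed.

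The only delicate point I anticipate is handling a random terminal index $T$, since rollouts can stop at different turns. I would deal with it by padding every rollout to a fixed horizon $T_{\max}$ with absorbing terminal nodes. After padding, $Z_s=r(\mathcal{H}_T)$ for all $s\ge T$, so these padded steps contribute zero increments to the quadratic variation. The argument above then goes through unchanged, which is equivalent to applying optional stopping to a bounded martingale.
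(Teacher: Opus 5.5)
Your proof is correct and follows essentially the same route as the paper. Both arguments use the Doob martingale property to kill cross terms, reducing $\mathbb{E}_\pi[[Z]_{t:T}\mid\mathcal{F}_t]$ to $\mathbb{E}_\pi[Z_T^2\mid\mathcal{F}_t]-Z_t^2$, and then use $Z_T^2=Z_T$ for binary rewards; the paper telescopes $Z_{s+1}^2-Z_s^2$ rather than expanding the squared sum of increments, which is only a bookkeeping difference. Your padding treatment of the random terminal index is a sound addition that the paper leaves implicit.
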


Proposition~\ref{prop:energy_identity} identifies the conditional expected quadratic variation with Bernoulli variance: $\mathbb{E}_{\pi}[[Z]_{t:T}\mid \mathcal{F}_t]=V_t^\pi(1-V_t^\pi)$. Thus $V_t^\pi(1-V_t^\pi)$ is not merely a static uncertainty score, but the expected accumulated movement in conditional success probability below the prefix, and therefore measures downstream contrast potential. Fig.~\ref{fig:contrast_allocation} confirms the corresponding empirical picture: prefix information improves group-success prediction, and a small subset of ranked anchors captures much of the total pair contrast. We prove Proposition~\ref{prop:energy_identity} in Appendix~\ref{sec:energy_identity_proof}.

\subsection{Contrast Values as Update Activation}
\label{sec:contrast_update_activation}

The allocation rules are chosen before the optimizer sees gradients, but they target the activation frontier where gradients become informative. Under binary rewards, a root or prefix anchor activates local gradient contrast only when its descendants expose both success and failure.

This observation connects contrast allocation to optimizer-visible gradients. Let $\mathcal{A}_{\mathrm{root}}=\{x_i\}_{i=1}^{B}$ and let $\mathcal{A}_{\mathrm{pref}}$ denote the local prefix-anchor set $\mathcal{A}_i$ for an active prompt; aggregation over prompts simply sums these local terms. For stage $q\in\{\mathrm{root},\mathrm{pref}\}$ and allocation $b$, we formulate the aggregate local policy-gradient contribution in general form as
\begin{equation}
  \begin{gathered}
  G_q(b)
  =
  \sum_{h\in\mathcal{A}_q}
  \mathbb{I}\{\mathsf{Act}(h,b_h)\}
  \widetilde G_h(b_h),\\
  \widetilde G_h(b_h)
  =
  \sum_{\mathcal{H}\in\mathcal{D}_h(b_h)}
  \omega_{\mathcal H}
  \nabla_\theta\log\pi_\theta(\mathcal{H}\mid h),
  \end{gathered}
  \label{eq:local_gradient_contribution}
\end{equation}
Here $\mathcal{D}_h(b_h)$ is the sampled terminal-descendant set below anchor $h$, $\omega_{\mathcal H}$ is the optimizer-dependent contrast weight, $\mathbb{I}\{\cdot\}$ is the indicator function, and $\mathsf{Act}(\cdot,\cdot)$ is the activation event: $\mathsf{Act}(h,b_h)$ holds when the sampled descendants below anchor $h$ under budget $b_h$ contain both outcomes.

\begin{proposition}[Activation allocation dominates uniform]
\label{prop:trace_dominance}
Let $b_q^\star$ be the TRACE allocation for stage $q\in\{\mathrm{root},\mathrm{pref}\}$, and let $b_q^u$ be any feasible uniform allocation at the same budget. Write $G_q^\star:=G_q(b_q^\star)$ and $G_q^u:=G_q(b_q^u)$. Assume normalized conditional gradient energy, meaning $\mathbb{E}[\|\widetilde G_h(b_h)\|^2\mid\mathsf{Act}(h,b_h),h]=c_q>0$ for all anchors in stage $q$. If the combined update also satisfies $\mathbb{E}\langle G_{\mathrm{root}}^\star,G_{\mathrm{pref}}^\star\rangle\ge\mathbb{E}\langle G_{\mathrm{root}}^u,G_{\mathrm{pref}}^u\rangle$, then
\begin{equation}
  \begin{gathered}
  \mathbb{E}\|G_q^\star\|^2
  \ge
  \mathbb{E}\|G_q^u\|^2,
  \quad q\in\{\mathrm{root},\mathrm{pref}\},\\
  \mathbb{E}\|G_{\mathrm{root}}^\star+G_{\mathrm{pref}}^\star\|^2
  \ge
  \mathbb{E}\|G_{\mathrm{root}}^u+G_{\mathrm{pref}}^u\|^2.
  \end{gathered}
  \label{eq:trace_dominates_uniform}
\end{equation}
Inequalities are strict when uniform is not optimal for the corresponding stage.
\end{proposition}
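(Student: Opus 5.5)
The plan is to reduce each stage inequality to a comparison of expected activation counts, and then to read TRACE's allocation rule as the maximizer of exactly that count. Fix a stage $q$ and an allocation $b=(b_h)_{h\in\mathcal{A}_q}$. Expanding the square in Eq.~\eqref{eq:local_gradient_contribution} gives
\begin{equation*}
  \mathbb{E}\|G_q(b)\|^2
  =
  \sum_{h\in\mathcal{A}_q}\Pr\bigl(\mathsf{Act}(h,b_h)\bigr)\,
  \mathbb{E}\bigl[\|\widetilde G_h(b_h)\|^2\mid\mathsf{Act}(h,b_h),h\bigr]
  +\sum_{h\neq h'}\mathbb{E}\bigl\langle \mathbb{I}_h\widetilde G_h,\mathbb{I}_{h'}\widetilde G_{h'}\bigr\rangle .
\end{equation*}
Here $\mathbb{I}_h$ abbreviates $\mathbb{I}\{\mathsf{Act}(h,b_h)\}$ and $\widetilde G_h$ abbreviates $\widetilde G_h(b_h)$. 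The normalized-energy assumption turns the diagonal part into $c_q\sum_h \Pr(\mathsf{Act}(h,b_h))$.

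For the off-diagonal terms, I would argue that they vanish, or at least do not depend on $b$. Distinct root anchors are sampled independently given the current policy. Prefix anchors within a prompt also draw their Stage~2 continuations independently given their histories. Conditioning on the anchor histories, each group-relative contrast term $\widetilde G_h$ is a score-function sum with a centred weight $\omega_{\mathcal H}$, so its conditional mean is zero. If a fully rigorous zero is unavailable, I would state this as the implicit standing assumption that the normalization clause is meant to encode. This is the step I expect to be the main obstacle, because the proposition does not spell out the independence or mean-zero condition. For prefix anchors nested on the same bare rollout the descendant sets overlap, so I would either restrict to disjoint counterfactual branches (the Stage~2 suffixes) or absorb the factual-branch overlap into the constant $c_q$.

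Once the cross terms are removed, $\mathbb{E}\|G_q(b)\|^2=c_q\,\Phi_q(b)$, where
\begin{equation*}
  \Phi_q(b)=\sum_{h}\Bigl(1-(V_h)^{b_h}-(1-V_h)^{b_h}\Bigr)
\end{equation*}
is the expected number of activated anchors under binary outcomes with success probability $V_h$. For a prefix anchor, the factual suffix would be added to the count in the exponent. This objective is precisely the mixed-outcome utility built from the contrast quantity $V(1-V)$ of Proposition~\ref{prop:energy_identity}. Each summand is concave in $b_h$ with diminishing increments, so greedy marginal allocation attains the maximum. I would take $b_q^\star$ to be this budget-constrained maximizer, which is how TRACE is defined from the shared utility. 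Any uniform allocation $b_q^u$ is feasible at the same budget, so $\Phi_q(b_q^\star)\ge\Phi_q(b_q^u)$. Multiplying by $c_q>0$ gives the per-stage inequality, and the inequality is strict exactly when $b_q^u$ is not a maximizer.

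For the combined update, I would expand
\begin{equation*}
  \|G_{\mathrm{root}}+G_{\mathrm{pref}}\|^2=\|G_{\mathrm{root}}\|^2+\|G_{\mathrm{pref}}\|^2+2\langle G_{\mathrm{root}},G_{\mathrm{pref}}\rangle
\end{equation*}
under both allocations and take expectations. The two per-stage inequalities together with the assumed cross-term inequality then give the claim. Strictness passes through whenever either stage is strict.
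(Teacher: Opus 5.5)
Your skeleton matches the paper's proof: factor each anchor's energy through $\mathbb{I}\{\mathsf{Act}(h,b_h)\}$, use normalized energy to get $c_q\sum_h\Pr(\mathsf{Act}(h,b_h))$, invoke optimality of $b_q^\star$ against a feasible uniform allocation, and expand the combined square using the cross-term hypothesis.

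\textbf{The prefix stage is where the argument fails.} Your summand $1-V_h^{b_h+1}-(1-V_h)^{b_h+1}$ is the probability that $b_h+1$ fresh Bernoulli draws are mixed, so it treats the factual outcome as unobserved. TRACE's Stage~2 allocation is computed after the bare rollouts return. It maximizes $\sum_h\bigl(1-q_h(r_h)^{b_h}\bigr)$ with $q_h(r)=rV_h+(1-r)(1-V_h)$, which is the probability that some new continuation flips the \emph{observed} reward (Eq.~\eqref{eq:pref_utility_practical}). These objectives have different maximizers. Suppose you have one continuation to place:
\begin{itemize}
\item an anchor with $V_h=0.9$ whose factual rollout failed is worth $0.9$ under TRACE's utility but only $1-0.81-0.01=0.18$ under yours;
\item an anchor with $V_{h'}=0.5$ is worth $0.5$ under both.
\end{itemize}
TRACE therefore branches at $h$, while your $\Phi_{\mathrm{pref}}$ branches at $h'$. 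So your step ``take $b_q^\star$ to be this maximizer, which is how TRACE is defined'' is false for $q=\mathrm{pref}$, and $\Phi_{\mathrm{pref}}(b^\star)\ge\Phi_{\mathrm{pref}}(b^u)$ is unsupported.

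The repair is the paper's Step~1: condition on the completed bare rollouts. You have to do this anyway, since $\mathcal{A}_i$, $T_{i,j}$ and $b^\star$ all depend on them. New continuations are i.i.d.\ given the prefix and independent of the factual suffix, so $\Pr(\mathsf{Act}(h,k)\mid h,r_h)=1-q_h(r_h)^k$. That is exactly TRACE's utility, and the maximizer argument then goes through.

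\textbf{Your justification for dropping the off-diagonal terms is also wrong.} A score-function sum with centred weights is not mean-zero. For the group-relative form of Eq.~\eqref{eq:app_group_grad},
\[
\mathbb{E}\bigl[\textstyle\sum_s(R_s-\bar R)\nabla_\theta\log\pi_\theta(\mathcal{H}_s\mid h)\mid h\bigr]=(m-1)\nabla_\theta V_h .
\]
So even for two active, independently sampled roots, the cross term is $(m_h-1)(m_{h'}-1)\langle\nabla_\theta V_h,\nabla_\theta V_{h'}\rangle$. This is generally nonzero and depends on the allocation. Nested prefix anchors on one bare rollout are correlated as well.

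You are right that the statement does not control these terms. The paper's Step~4 sidesteps them by writing the stage energy directly as $\sum_h V_h(b_h)C_h(b_h)$, i.e.\ by summing per-anchor energies. You should state this explicitly, either as a definition or as an orthogonality assumption, rather than present it as derived.

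Finally, your concavity/greedy remark is unnecessary and false at the root. For $v\in(0,1)$, $V_{\mathrm{root}}(x,0)=V_{\mathrm{root}}(x,1)=0<V_{\mathrm{root}}(x,2)$, and $m=1$ is excluded. The proof uses only exact optimality of $b_q^\star$, which TRACE's dynamic program provides.
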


Proposition~\ref{prop:trace_dominance} suggests the main allocation consequence: the first line of Eq.~\eqref{eq:trace_dominates_uniform} shows that each stage separately strengthens the expected squared local gradient signal over uniform allocation, and the second line shows that their combination strengthens the full tree update when the two gradient sources do not cancel. The proof in Appendix~\ref{app:allocation_dominance} shows the factorization behind this statement: under standard outcome-reward updates, the local gradient contribution is zero unless opposite outcomes are observed below the same anchor, so the expected squared gradient norm equals an activation probability times a conditional gradient scale.

\section{TRACE}
\label{sec:method}

The previous section identifies the useful sampling targets: roots and prefixes whose descendants are likely to disagree in terminal reward. TRACE instantiates this principle as a global-to-local allocation procedure, summarized in Fig.~\ref{fig:trace_framework} and Algorithm~\ref{alg:trace}. It solves root allocation over a candidate prompt pool and local prefix expansion over visited prefix occurrences. The predictor $\tilde V_\psi(\mathcal H)$ estimates the same conditional success probability written as $V_t^\pi$ for a turn-indexed history $\mathcal H_t$ in Section~\ref{sec:prelim}. The resulting trees supervise this predictor through recursive targets and provide root-/prefix-level comparisons for tree-aware policy optimization.

\begin{figure*}[t]
  \centering
  \includegraphics[width=0.98\textwidth]{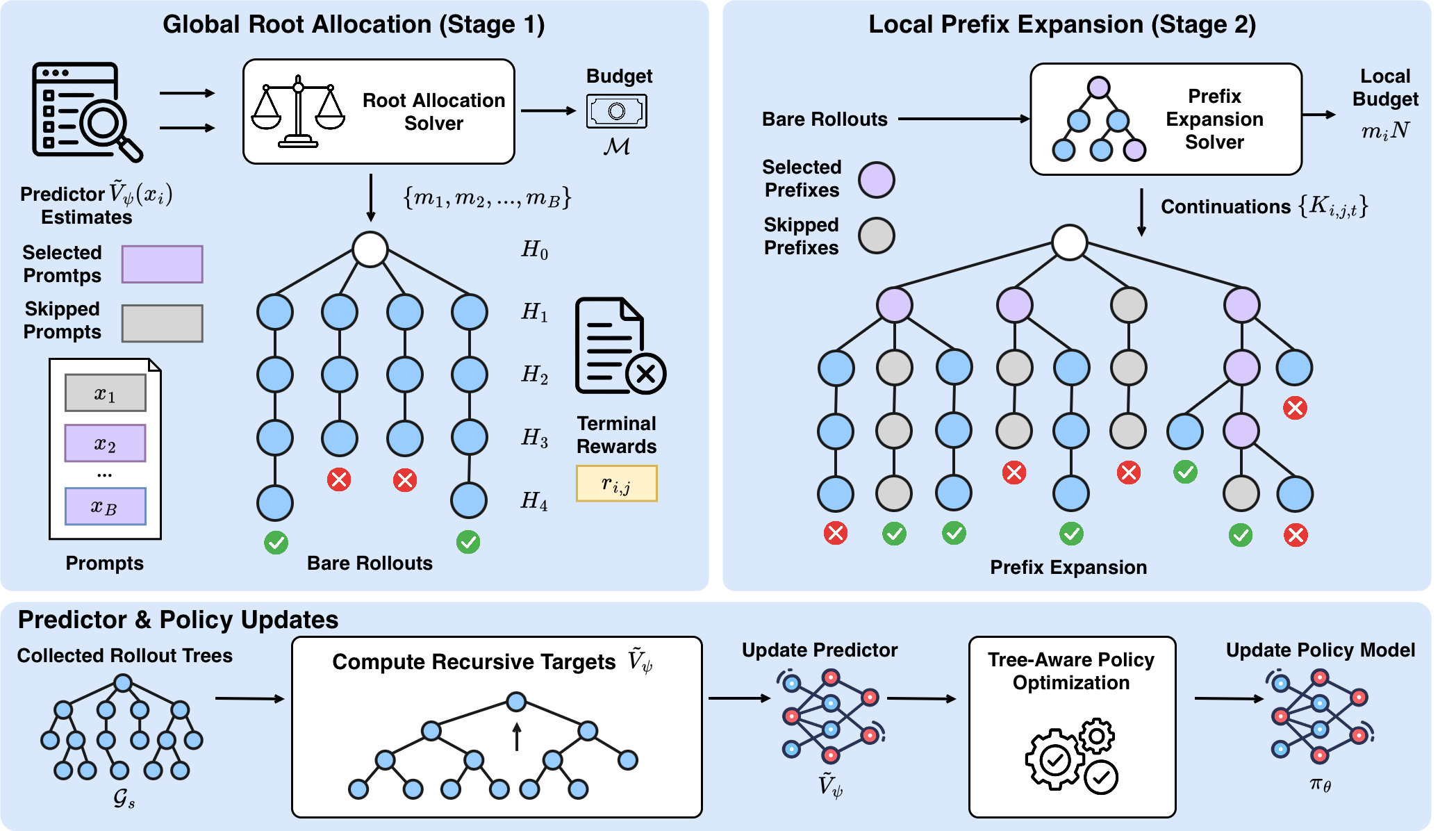}
  \caption{\textbf{Framework overview of TRACE.} A prefix value predictor scores prompt roots and visited prefixes, TRACE solves budgeted root allocation and prompt-local prefix expansion, and the resulting rollout trees provide recursive value targets and root-/prefix-level comparisons for tree-aware policy optimization.}
  \label{fig:trace_framework}
  \vspace{-5pt}
\end{figure*}

\subsection{Global Root Allocation}
\label{sec:bg_constrained_allocation}

At step $s$, TRACE allocates a root rollout budget over a candidate prompt set $\mathcal{B}_s=\{x_1,\ldots,x_B\}$, where $B=|\mathcal{B}_s|$ is the candidate batch size and each prompt receives a count in $\mathcal{M}:=\{0\}\cup\{2,\ldots,M\}$. Let $v_i:=\tilde V_\psi(x_i)$ be the predicted root success probability. For $m\ge2$, TRACE assigns
\begin{equation}
  V_{\mathrm{root}}(x_i,m)
  =
  1-v_i^m-(1-v_i)^m,
  \label{eq:root_esr_value}
\end{equation}
and sets $V_{\mathrm{root}}(x_i,0)=V_{\mathrm{root}}(x_i,1)=0$. This is the predicted probability that $m$ root rollouts for $x_i$ contain success and failure. Formally, root allocation solves the budget-constrained problem:
\begin{equation}
  \begin{gathered}
  \max_{m_1,\ldots,m_B}
  \sum_{i=1}^{B}V_{\mathrm{root}}(x_i,m_i)\\
  \text{s.t.},\;
  \sum_{i=1}^{B}m_i=M,\;
  m_i\in\mathcal{M}.
  \end{gathered}
  \label{eq:stage1_root_allocation}
\end{equation}
Setting $m_i=0$ skips the prompt. We exclude $m_i=1$ from $\mathcal{M}$ because typical tree-aware policy optimization is group-based and requires at least two root rollouts per prompt. Any $m_i\ge2$ activates the prompt and sets the number of bare rollouts generated below it.

\subsection{Local Prefix Expansion}

After active prompt $x_i$ finishes its $m_i$ bare rollouts, let $r_{i,j}\in\{0,1\}$ be the terminal reward of bare rollout $j$. Its prefix after turn $t$ is $\mathcal{H}_{i,j,t}$, with $\mathcal{H}_{i,j,0}=x_i$. Recall the prefix-anchor index set $\mathcal{A}_i$ defined in Eq.~\eqref{eq:prefix_anchor_set}, each index $(j,t)\in\mathcal{A}_i$ denotes a visited nonterminal prefix together with the factual terminal outcome already observed below it. We allocate a fixed local budget
\begin{equation}
  \sum_{(j,t)\in\mathcal{A}_i}K_{i,j,t}=m_iN,
\end{equation}
so Stage~2 does not wait for or compete with other prompts once prompt $x_i$ is ready.

Using the same predictor for the prefix success probability, assigning $k$ continuations to prefix occurrence $(j,t)$ has value
\begin{equation}
  V_{\mathrm{pref}}(i,j,t,k)
  :=
  1
  -\Big[
  r_{i,j}\tilde V_\psi(\mathcal{H}_{i,j,t})
  +(1-r_{i,j})(1-\tilde V_\psi(\mathcal{H}_{i,j,t}))
  \Big]^k,
  \label{eq:pref_utility_practical}
\end{equation}
namely the probability that at least one new continuation flips the observed reward. For each active prompt $x_i$, local prefix expansion solves
\begin{equation}
  \begin{gathered}
  \max_{\{K_{i,j,t}\}_{(j,t)\in\mathcal{A}_i}}
  \sum_{(j,t)\in\mathcal{A}_i}
  V_{\mathrm{pref}}(i,j,t,K_{i,j,t})\\
  \text{s.t.}\;
  \sum_{(j,t)\in\mathcal{A}_i}K_{i,j,t}=m_iN.
  \end{gathered}
  \label{eq:stage2_prefix_allocation}
\end{equation}
Both allocation problems are solved by efficient dynamic programming tools, whose cost is negligible relative to rollout generation. Expanding $(j,t)$ fixes the prefix $\mathcal{H}_{i,j,t}$ and resamples a new continuation after that prefix. By design, terminal leaves are meaningless to expand.

This factorization is intentionally system-aware: a fully global prefix allocator over $\bigcup_i\mathcal{A}_i$ would wait for all active prompts to finish bare rollouts before launching any continuation. TRACE only requires prompt-level completion. Since the $m_i$ bare rollouts for prompt $x_i$ are co-located on the same worker, their generation does not suffer from intra-prompt long-tail waiting. Consequently, the expansion for prompt $x_i$ can be enqueued immediately upon their return, bypassing the inter-prompt waiting caused by other prompts distributed across different workers.

\subsection{Online Prefix Value Estimation}
\label{sec:value_model}

The allocation objectives require the conditional success probability at both prompt roots and visited prefixes. We learn a shared generalizable predictor that estimates it from prefix histories at both levels.
\begin{equation}
  \tilde V_\psi:\;\mathcal{H}_t\;\mapsto\;\tilde V_\psi(\mathcal{H}_t)\in[0,1],
  \label{eq:predictor_interface}
\end{equation}
which provides $\tilde V_\psi(x_i)$ for Eq.~\eqref{eq:root_esr_value} and $\tilde V_\psi(\mathcal{H}_{i,j,t})$ for Eq.~\eqref{eq:pref_utility_practical}. It is used only by the allocator, not by the downstream policy optimizer.

\paragraph{Recursive tree-backed targets.}
After a rollout tree is collected, its terminal leaves provide binary rewards. For any node $y$, let $\mathcal{C}(y)$ be its children and let $n_y$ be the number of executed terminal descendants below it:
\[
  n_y =
  \begin{cases}
    1, & y \text{ is a terminal leaf},\\
    \sum_{c\in\mathcal{C}(y)}n_c, & \text{otherwise},
  \end{cases}
\]
We set $\widehat V(y)=r_y$ at terminal leaves and compute internal targets by a bottom-up empirical average:
\begin{equation}
  \widehat V(y)
  =
  \frac{1}{n_y}
  \sum_{c\in\mathcal{C}(y)}
  n_c\widehat V(c).
  \label{eq:node_value_recursive}
\end{equation}
Thus $\widehat V(y)$ is the empirical success rate of terminal descendants below $y$. We train $\tilde V_\psi$ by mean-squared regression over a supervision set $\mathcal{S}$ of roots and informative internal nodes:
\begin{equation}
  \mathcal{L}_{\mathrm{value}}
  :=
  \frac{1}{|\mathcal{S}|}
  \sum_{y\in\mathcal{S}}
  \left(\tilde V_\psi(y)-\widehat V(y)\right)^2.
  \label{eq:value_loss}
\end{equation}
The exact supervision set and lightweight predictor implementation details are reported in Appendix~\ref{app:value_impl}.

\begin{algorithm}[t]
  \caption{TRACE rollout allocation}
  \label{alg:trace}
  \KwInput{
    Candidate pool sequence $\{\mathcal{B}_s\}_{s=1}^{S}$ with $|\mathcal{B}_s|=B$;
    initial policy $\pi_\theta$;
    predictor $\tilde V_\psi$;
    root budget $M$;
    expansion factor $N$.
  }
  \KwOutput{Finetuned LLM policy $\pi_\theta$.}
  \For{$s=1$ \KwTo $S$}{
    Initialize collected tree set $\mathcal{G}_s\leftarrow\emptyset$\;
    \tcp{\textcolor{blue}{Global root allocation}}
    Estimate $\tilde V_\psi(x_i)$ for all $x_i\in\mathcal{B}_s$\;
    Solve Eq.~\eqref{eq:stage1_root_allocation} for root counts $\{m_i\}_{i=1}^{B}$\;
    \ForEach{prompt $x_i$ with $m_i>0$}{
      Generate $m_i$ bare rollouts from $x_i$ and collect terminal rewards\;
      Build the prefix-anchor index set $\mathcal{A}_i$\;
      Estimate $\tilde V_\psi(\mathcal{H}_{i,j,t})$ for each $(j,t)\in\mathcal{A}_i$\;
      \tcp{\textcolor{blue}{Local prefix expansion}}
      Solve Eq.~\eqref{eq:stage2_prefix_allocation} for continuation counts $\{K_{i,j,t}\}$\;
      \ForEach{$(j,t)$ with $K_{i,j,t}>0$}{
        Generate $K_{i,j,t}$ continuations from prefix $\mathcal{H}_{i,j,t}$\;
      }
      Add the resulting prompt-local rollout trees to $\mathcal{G}_s$\;
    }
    \tcp{\textcolor{blue}{Predictor and policy updates}}
    Compute recursive targets $\widehat V$ on $\mathcal{G}_s$ and update $\tilde V_\psi$ by Eq.~\eqref{eq:value_loss}\;
    Update $\pi_\theta$ with any tree-aware optimizer using $\mathcal{G}_s$\;
  }
\end{algorithm}

\subsection{Tree-Aware Policy Optimization}
\label{sec:policy_optimization}

TRACE passes the completed rollout trees to a tree-aware policy optimizer as in Algorithm~\ref{alg:trace}. This optimizer can use any prefix-level credit rule that propagates leaf rewards through the tree, including process-reward backups or group-relative backups. In our experiments, we use the concrete instance described in Appendix~\ref{app:treegrpo_instance}.

\section{Experiments}
\label{sec:experiments}

\subsection{Experimental Setup}
\label{sec:setup}

\paragraph{Datasets and models.} We evaluate TRACE on three representative multi-turn settings. (1) \textbf{Mathematical Reasoning}: trains on the DeepScaler corpus~\citep{luo2025deepscaler} with a Python interpreter and evaluates on AIME24, AMC23, MATH500~\citep{lightman2024let}, MinervaMath (Minerva)~\citep{lewkowycz2022solving}, OlympiadBench (Olympiad)~\citep{he2024olympiadbench}, and three out-of-distribution benchmarks: MMLU-Pro~\citep{wang2024mmlu}, ARC-c~\citep{clark2018think}, and GPQA-diamond (GPQA)~\citep{rein2023gpqa}. (2) \textbf{Multi-Hop QA}: trains on HotpotQA (Hotpot)'s training split~\citep{yang2018hotpotqa} and evaluates on its validation split, 2WikiMultiHopQA (2Wiki)~\citep{ho2020constructing}, MuSiQue (Musiq)~\citep{trivedi2022musique}, and Bamboogle (Bamb)~\citep{press2023measuring} with a local E5 retrieval server~\citep{wang2022text} built over a Wikipedia dump~\citep{karpukhin2020dense}. (3) \textbf{Function Calling}: trains on 80\% of the BFCL v4~\citep{patil2025berkeley} multi-turn split and evaluates on the held-out 20\% test portion across the base (Base), long-context (Long), missing-function (Miss-Func), and missing-parameter (Miss-Param) subsets. Main experiments use Qwen3-8B and Qwen3-14B policy backbones~\citep{yang2025qwen3}. Appendix~\ref{sec:additional_results} further reports results using Llama-3.2-3B-Instruct~\citep{grattafiori2024llama} as an additional backbone.

\paragraph{Baselines.} We compare TRACE against four baselines. (1) \textbf{ReAct}~\citep{yao2022react}: evaluates the base model in the same multi-turn agent and tool scaffold. (2) \textbf{GRPO}~\citep{shao2024deepseekmath}: samples prompts randomly at the prompt level. (3) \textbf{PCL}~\citep{gao2025prompt}: a pure CoT prompt-selection baseline that actively filters prompts by difficulty. (4) \textbf{TreePO}: adds tree-structured rollouts and tree-aware updates on top of GRPO, but still selects turn-level continuations randomly, as instantiated in Appendix~\ref{app:treegrpo_instance}. All methods, including TRACE, use the same rollout-budget accounting described in Appendix~\ref{app:init_expand}, so gains reflect allocation rather than additional samples.

\paragraph{Metrics.} We report final-answer accuracy for \textbf{Mathematical Reasoning}, HotpotQA-style exact match and token-level F1 with partial credit at F1 $\ge 0.3$ for \textbf{Multi-Hop QA}, and official BFCL success rate for \textbf{Function Calling}, averaging each benchmark over a task-specific number of sampled trajectories and reporting the average across benchmarks within each setting. Appendix~\ref{sec:additional_results} reports allocation and predictor diagnostics, while Appendix~\ref{sec:experimental_details} gives full evaluation protocols, avg@$k$ multiplicities, dataset splits, model settings, and rollout details.

\subsection{Main Results}
\label{sec:main_results}

\begin{figure*}[t]
  \centering
  \begin{subfigure}[t]{0.32\textwidth}
    \centering
    \includegraphics[width=\linewidth]{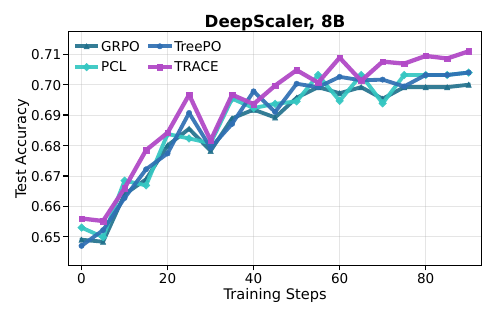}
  \end{subfigure}
  \begin{subfigure}[t]{0.32\textwidth}
    \centering
    \includegraphics[width=\linewidth]{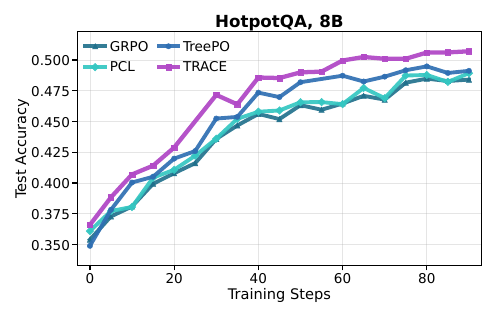}
  \end{subfigure}
  \begin{subfigure}[t]{0.32\textwidth}
    \centering
    \includegraphics[width=\linewidth]{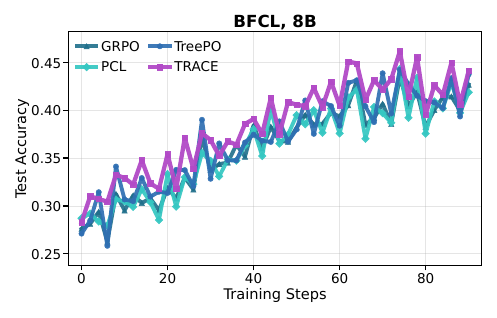}
  \end{subfigure}
  \begin{subfigure}[t]{0.32\textwidth}
    \centering
    \includegraphics[width=\linewidth]{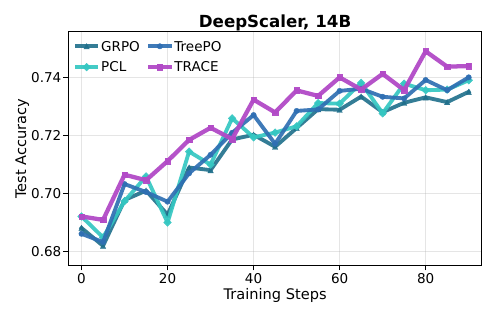}
  \end{subfigure}
  \begin{subfigure}[t]{0.32\textwidth}
    \centering
    \includegraphics[width=\linewidth]{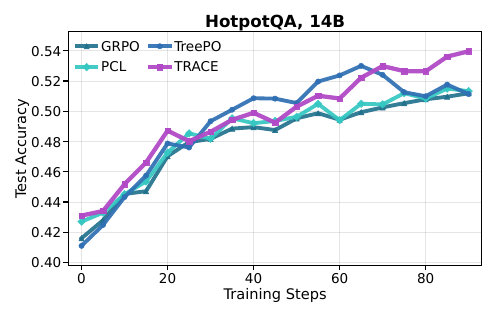}
  \end{subfigure}
  \begin{subfigure}[t]{0.32\textwidth}
    \centering
    \includegraphics[width=\linewidth]{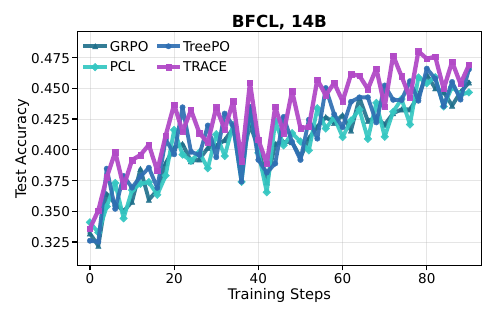}
  \end{subfigure}
  \caption{\textbf{Test accuracy during training.} The six panels cover Mathematical Reasoning (DeepScaler), Multi-Hop QA (HotpotQA), and Function Calling (BFCL v4) with Qwen3-8B (top) and Qwen3-14B (bottom). We compare GRPO, PCL, random TreePO allocation, and TRACE under the same rollout-budget setting. Higher curves indicate stronger final policies under identical sampling budgets.}
  \label{fig:hotpotqa_accuracy}
\end{figure*}

\begin{figure*}[t]
  \centering
  \begin{subfigure}[t]{0.32\textwidth}
    \centering
    \includegraphics[width=\linewidth]{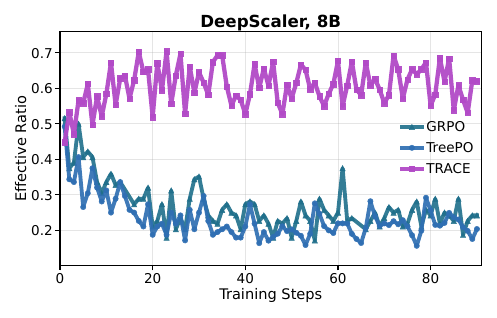}
  \end{subfigure}
  \begin{subfigure}[t]{0.32\textwidth}
    \centering
    \includegraphics[width=\linewidth]{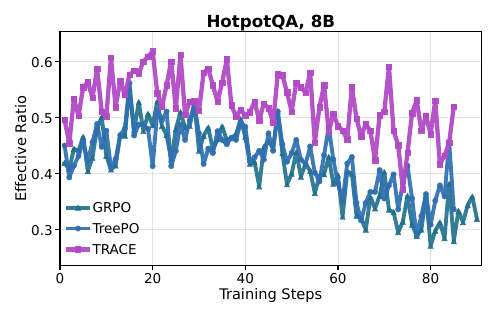}
  \end{subfigure}
  \begin{subfigure}[t]{0.32\textwidth}
    \centering
    \includegraphics[width=\linewidth]{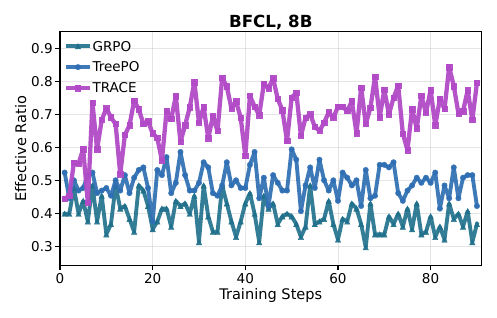}
  \end{subfigure}
  \vspace{+2pt}
  \begin{subfigure}[t]{0.32\textwidth}
    \centering
    \includegraphics[width=\linewidth]{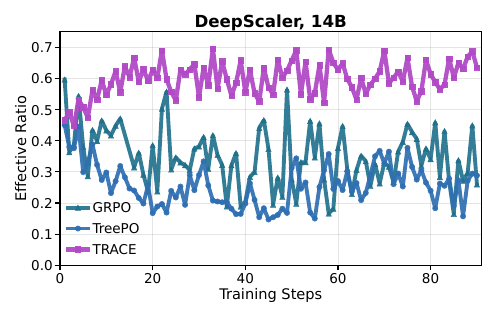}
  \end{subfigure}
  \begin{subfigure}[t]{0.32\textwidth}
    \centering
    \includegraphics[width=\linewidth]{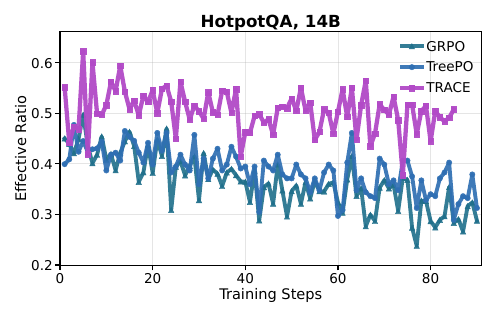}
  \end{subfigure}
  \begin{subfigure}[t]{0.32\textwidth}
    \centering
    \includegraphics[width=\linewidth]{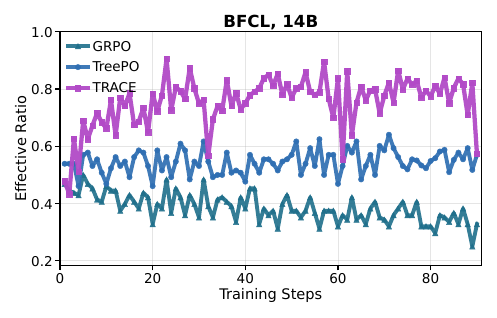}
  \end{subfigure}
  \caption{\textbf{Effective ratio during training.} The six panels cover Mathematical Reasoning (DeepScaler), Multi-Hop QA (HotpotQA), and Function Calling (BFCL v4) with Qwen3-8B (top) and Qwen3-14B (bottom). The panels compare the fraction of contrastive samples selected by each allocation strategy. Higher values indicate more non-degenerate reward groups per update.}
  \label{fig:app_prompt_effective_ratio}
\end{figure*}

\subsubsection{Improved Accuracy under the Same Budget}

Figure~\ref{fig:hotpotqa_accuracy} shows the training curves across all three settings and two model sizes. Overall, TRACE improves average performance across Mathematical Reasoning, Multi-Hop QA, and Function Calling for both Qwen3-8B and Qwen3-14B, consistently yielding higher evaluation curves than GRPO, PCL, and TreePO under the same rollout budget. On Mathematical Reasoning (DeepScaler), it improves the in-distribution average over GRPO from 70.0 to 71.1 for Qwen3-8B and from 73.5 to 74.9 for Qwen3-14B; similar gains appear on Multi-Hop QA and Function Calling. The TreePO comparison is especially revealing because both methods share tree rollouts and tree-aware updates, yet TRACE allocates branches to roots and prefixes predicted to be more informative rather than sampling them randomly. Full per-benchmark breakdowns are reported in Appendix~\ref{sec:app_detailed_results}.

\subsubsection{More Informative Rollouts under the Same Budget}

Figure~\ref{fig:app_prompt_effective_ratio} reports the effective ratio during training. At the prompt level, this metric is the within-batch fraction of prompts whose rollout trees contain terminal leaves with both successful and failed outcomes. Under a matched rollout budget, TRACE consistently achieves a higher training-time average effective ratio than GRPO, PCL, and TreePO across all three settings and both model scales. On Mathematical Reasoning (DeepScaler), it raises the average over GRPO from 26.8\% to 60.6\% for Qwen3-8B and from 34.7\% to 59.7\% for Qwen3-14B; similar gains appear on Multi-Hop QA and Function Calling. By directing rollout budget toward contrast-rich roots and prefixes rather than saturated or low-variance anchors, TRACE increases the frequency of non-degenerate terminal groups and amplifies the group-relative policy-update signal per rollout unit. The full metric definition appears in Appendix~\ref{sec:app_detailed_results}.

\subsubsection{Reliable Difficulty Prediction}

Figures~\ref{fig:app_prompt_spearman} and~\ref{fig:app_node_spearman} show that the lightweight predictor learns usable difficulty rankings at both prompt and node levels. The prompt-level signal is expected, since root outcomes provide the densest supervision. More importantly, the signal transfers to internal prefixes: even when training is dominated by prompt-level supervision, the same predictor still ranks node difficulty reliably enough to guide turn-level branching. This suggests that prefix histories carry a learnable notion of local uncertainty, not just noisy fragments of full trajectories. Appendix~\ref{sec:additional_results} further reports the Spearman definition and per-domain trends.

\subsection{Additional Analysis}

\subsubsection{Ablation Studies}

We isolate the two allocation stages on Qwen3-8B Multi-Hop QA (HotpotQA) by replacing each active allocator with uniform allocation while keeping the tree budget and optimizer fixed. Table~\ref{tab:stage_ablation} shows that both stages help and their gains stack: root allocation selects prompts likely to become non-degenerate, while prefix allocation spends continuation budget on prefixes where additional descendants can still reveal contrast.

\begin{table}[t]
    \centering
    \caption{\textbf{Ablation of active allocation stages on Qwen3-8B Multi-Hop QA (HotpotQA).} Stage~1 allocates prompt-root budget; Stage~2 allocates continuation budget over visited prefixes.}
    \vspace{-5pt}
    \resizebox{0.5\columnwidth}{!}{
    \begin{tabular}{cccc}
        \toprule
        \textbf{Stage 1} & \textbf{Stage 2} & \textbf{Avg. Acc.} & \textbf{Avg. Eff. Ratio} \\
        \midrule
        Uniform & Uniform &49.5 &42.8 \\
        Active & Uniform &49.8 &49.1 \\
        Uniform & Active &50.0 &47.3 \\
        Active & Active &\textbf{50.6} &\textbf{52.3} \\
        \bottomrule
    \end{tabular}}
    \label{tab:stage_ablation}
\end{table}

\subsubsection{Compatibility with Different Budgets}

We also vary the tree budget on Qwen3-8B Multi-Hop QA (HotpotQA). Since the expected rollout budget is $M(1+N/2)$, $(512,2)$ uses budget 1024, while $(512,6)$ and $(1024,2)$ both use budget 2048. Table~\ref{tab:hotpotqa_budget_sensitivity} shows that TRACE improves TreePO at every budget, lifting the effective ratio by about 9.5--10.2 points and gaining roughly one accuracy point. It also shows that budget shape matters, not only total size: at budget 2048, broader root coverage with $(1024,2)$ is stronger than deeper continuation sampling from 512 roots. The bottleneck is therefore not just rollout count, but whether the budget reaches states where rewards can form contrast.

\begin{table}[t]
    \centering
    \caption{\textbf{Compatibility with different budgets on Qwen3-8B Multi-Hop QA (HotpotQA).} We vary the root budget $M$ and continuation budget $N$ and compare random TreePO allocation with TRACE.}
    \vspace{-5pt}
    \resizebox{0.5\columnwidth}{!}{
    \begin{tabular}{ccccc}
        \toprule
        \textbf{$M$} & \textbf{$N$} & \textbf{Method} & \textbf{Avg. Acc.} & \textbf{Avg. Eff. Ratio} \\
        \midrule
        512 & 2 & TreePO & 48.8 & 32.2 \\
        512 & 2 & TRACE & 49.7 & 42.4 \\
        512 & 6 & TreePO & 49.4 & 37.7 \\
        512 & 6 & TRACE & 50.3 & 47.8 \\
        1024 & 2 & TreePO & 49.5 & 42.8 \\
        1024 & 2 & TRACE & \textbf{50.6} & \textbf{52.3} \\
        \bottomrule
    \end{tabular}}
    \label{tab:hotpotqa_budget_sensitivity}
\end{table}

\section{Conclusion}
\label{sec:conclusion}

This work formulates mixed-reward contrast allocation as a unified view of sample-efficient RLVR for multi-turn agents. We show that prompt filtering, rollout-count allocation, and prefix branching are all budget decisions over rollout-tree anchors, and present TRACE to allocate budget guided by a shared generalizable predictor of conditional success probability. TRACE directs budget toward anchors whose descendants are likely to expose mixed terminal outcomes, strengthening the group-relative update signal and implicit stepwise preference pairs under outcome-only rewards. Experiments on Mathematical Reasoning, Multi-Hop QA, and Function Calling confirm improved accuracy and rollout informativeness under a matched budget. Moreover, this contrast-allocation view reframes agentic RLVR sampling as a question of where to branch within the rollout tree, not only how many independent rollouts to draw.

\section*{Limitations}

TRACE is mainly developed for outcome-only RLVR, and tasks without clear terminal verification may therefore require revisiting the mixed-outcome contrast objective. For allocation, higher prediction accuracy can lead to better budget placement, while predictor instantiations remain orthogonal to the TRACE framework. In this work, we instantiate the predictor in a vanilla setting, and stronger alternatives for different agentic tasks remain future work. Empirically, our evaluation is mainly conducted on Mathematical Reasoning, Multi-Hop QA, and Function Calling with Qwen3-8B and Qwen3-14B, leaving more complex and non-stationary agentic scenarios for future exploration.





\bibliographystyle{iclr2025_conference}
\bibliography{custom}

\newpage

\appendix

\section*{Appendix Overview}

This appendix provides supplementary material for TRACE, including related work, discussion, extended experimental results, theoretical proof, experimental details, and representative data examples.
The appendix is organized as follows:

\begin{itemize}[leftmargin=10pt]
  \item \textbf{Appendix~\ref{sec:related} (Related Work):}
  reviews prior work on RLVR and agentic RLVR, sample-efficient prompt selection and allocation, tree search for LLMs, and credit assignment under sparse rewards.

  \item \textbf{Appendix~\ref{sec:discussion} (Discussion):}
  discusses allocation as an orthogonal layer, the axes of training-time sampling in Table~\ref{tab:method_compare}, and root-to-prefix predictor generalization.

  \item \textbf{Appendix~\ref{sec:additional_results} (Extended Experimental Results):}
  reports the full benchmark tables and diagnostic analyses, including effective-ratio behavior, prefix-predictor rank correlation, transfer to Llama-3.2-3B-Instruct, computational overhead, and TRACE allocation behavior.

  \item \textbf{Appendix~\ref{sec:proof} (Theoretical Proof):}
  gives detailed proofs for prefix predictability, the energy identity connecting prefix uncertainty to reward variance, and the TRACE dominance result.

  \item \textbf{Appendix~\ref{sec:experimental_details} (Experimental Details):}
  describes the tasks, models, training settings, policy optimizers and TreePO instances, rollout construction and budget accounting, and prefix-predictor supervision and implementation.

  \item \textbf{Appendix~\ref{appsec:dataexample} (Data Examples):}
  presents representative real data examples from the three training domains, including the system prompts and tool schemas seen by the agents.
\end{itemize}

\section{Related Work}
\label{sec:related}

\paragraph{RLVR and Agentic RLVR.}
Reinforcement learning with verifiable rewards (RLVR) has become a dominant post-training paradigm for LLMs on tasks with automatically checkable outcomes~\citep{jaech2024openai,guo2025deepseek,team2025kimi}. Proximal Policy Optimization (PPO)~\citep{schulman2017proximal} remains a general policy-gradient backbone, while Group Relative Policy Optimization (GRPO)~\citep{shao2024deepseekmath} is especially attractive for RLVR because it removes the expensive value model and estimates advantages from groups of sampled rollouts. Much of RLVR still lives in a flat regime where a prompt is sampled, complete responses are generated, and a terminal reward drives the update. Agentic environments fracture this abstraction. A rollout is a chain of thought-action-observation turns~\citep{yao2022react}, and intermediate prefixes such as search queries, tool calls, and partial solution states can alter the entire downstream trajectory. TRACE uses this structure to turn rollout budget from a scalar sampling count into a structural allocation decision across roots and prefixes.

\paragraph{Sample Efficient RLVR.}
Sample-efficient RLVR improves policy updates by deciding which examples deserve rollout budget. Offline curation filters prompts by difficulty, diversity, length, or estimated solution quality~\citep{ye2025limo,li2025limr,wen2025light,hu2025open,yang2024qwen2math,fatemi2025concise,wang2025oneexample}, reducing training cost but adding preprocessing and remaining static as the policy evolves~\citep{qu2025prompt,gao2025prompt,mao2026rlvr}. Online prompt selection addresses this limitation by adapting to the current model, either through rollout-based filtering of uninformative prompts~\citep{yu2025dapo,liu2025prorl,cui2025process,meng2025mm,bae2026online,xu2026prune,zhang2026train} or through predictors that estimate prompt difficulty~\citep{zhang2025srpo,zheng2025act,gao2025prompt,qu2025prompt,zeng2025cures,chen2025self,mao2026dynamics,qu2026small,zou2025utility,zhang2026v_,hu2025vade,shen2025bots}. A complementary prompt-level allocation line asks how many samples each selected prompt should receive~\citep{li2025knapsack,yao2026coba}. This line is effective for dynamic rollout accounting, but it typically relies on pre-calculated per-prompt accuracy and has limited ability to generalize to unseen prompts without explicit rollouts. TRACE treats filtering, rollout counts, and prefix branching as boundary cases of one allocation operation, using a shared generalizable predictor to score candidate anchors from prompt roots to internal prefixes and an allocator to assign budget according to predicted conditional success probability.

\paragraph{Tree Search for LLMs.}
Tree search lets LLMs branch over interaction prefixes and compare with alternative continuations, rather than sampling a single complete trajectory. Test-time methods such as Tree-of-Thought~\citep{yao2023tree,long2023large,snell2024scaling,koh2024tree,zhou2023language} and MCTS-style decoding~\citep{xin2025deepseek} spend additional computation at test time to search for a stronger answer for each prompt. Offline variants sample or score reasoning trees and distill the induced comparisons into SFT or preference-learning data~\citep{feng2023alphazero,he2024advancing,xie2024monte,zhang2025process,lai2024step,li2025iterative}. Online tree-based RL brings branching into training, using tree rollouts for value estimation or finer credit assignment~\citep{zhang2024rest,hou2025treerl,ji2025tree,yang2025treerpo,kazemnejad2024vineppo,guo2025segment,dong2025agentic,zhao2026training}. TRACE is closest to this online lineage, but repurposes the tree. It branches over agent interaction prefixes rather than tokens or generic reasoning steps, and uses the tree not as an inference solver, but as an allocation substrate for discovering reward contrast under a fixed rollout budget.

\paragraph{Credit Assignment under Sparse Rewards.}
Sparse outcome rewards are coarse signals for multi-turn agents because one terminal success or failure can be assigned back to many earlier decisions while intermediate decisions can carry sharply different responsibility for the final result~\citep{feng2025group}. Classical RL attacks this with exploration bonuses, replay, goal relabeling, reward shaping, or value-based credit estimates~\citep{pathak2017curiosity,andrychowicz2017hindsight,ng1999policy,arjona2019rudder}. In LLM agents, process reward models (PRMs) can generate step- or token-level feedback, but dense process rewards are costly to annotate and brittle across open-ended tool-use settings. Instead, tree rollouts already provide a form of implicit credit assignment by comparing continuations that share a prefix~\citep{ji2025tree,zhao2026training}. TRACE strengthens this mechanism without explicit process supervision by allocating more branches to prefixes whose alternative continuations are likely to diverge in outcome, turning sparse final rewards into sharper local preference signals.

\begin{table*}[t]
  \centering
  \caption{\textbf{Qualitative comparison of training-time allocation methods.} GRPO and PCL operate over flat rollouts, TreePO introduces a tree substrate with random branching, and TRACE uses learned allocation at both prompt roots and visited prefixes.}
  \vspace{-5pt}
  \renewcommand\arraystretch{1.15}
  \resizebox{0.86\textwidth}{!}{
  \setlength{\tabcolsep}{5.0mm}{
  \begin{tabular}{lccccc}
    \toprule
    \multirow{2}{*}{\textbf{Method}} &
    \textbf{Prompt} &
    \textbf{Root} &
    \textbf{Prefix} &
    \textbf{Tree-Structured} &
    \textbf{Learned} \\
    &
    \textbf{Selection} &
    \textbf{Budget} &
    \textbf{Expansion} &
    \textbf{Update} &
    \textbf{Allocator} \\
    \midrule
    \textbf{GRPO} & Random & Uniform & \(\times\) & \(\times\) & \(\times\) \\
    \textbf{PCL} & Predictive & Fixed & \(\times\) & \(\times\) & Root only \\
    \textbf{TreePO} & Random & Uniform & Random & \(\checkmark\) & \(\times\) \\
    \midrule
    \rowcolor{gray!20}\textbf{TRACE} & Predictive & Adaptive & Predictive & \(\checkmark\) & Root + prefix \\
    \bottomrule
  \end{tabular}}}
  \label{tab:method_compare}
\end{table*}

\section{Discussion}
\label{sec:discussion}

\subsection{Allocation as an Orthogonal Layer}

TRACE separates rollout acquisition from policy optimization. Tree-structured objectives such as TreeRPO~\citep{yang2025treerpo} and Tree-GRPO~\citep{ji2025tree} define how an existing tree becomes gradients through branch comparisons, value targets, or group-relative advantages. TRACE operates upstream by deciding which anchors deserve additional descendants before the optimizer sees the tree. Thus the same policy objective can receive a contrast-poor or contrast-rich tree under the same rollout budget.

The predictor and TreePO instance used in our experiments are practical instantiations rather than primary technical commitments. TRACE only requires a predictor that ranks anchors by expected reward contrast and a tree-structured optimizer that consumes the resulting rollout tree. Better predictors should improve budget placement, but the framework is not tied to a specific predictor architecture or tree-policy objective.

\subsection{Axes of Training-Time Sampling}

Table~\ref{tab:method_compare} organizes training-time sampling methods by the allocation axes they make controllable. GRPO exposes only a flat sampling axis, PCL turns prompt admission into a learned decision, and TreePO exposes a tree substrate while leaving the tree acquisition policy random. The missing axis is not another optimizer choice, but a structural sampling decision inside the rollout itself. Once a trajectory has interacted with a search engine, a Python interpreter, or an API environment, its visited prefixes become decision states with heterogeneous downstream uncertainty. Some have already collapsed into near-certain success or failure, while others remain poised at points where an alternative query, tool call, or reasoning continuation can alter the final reward. TRACE treats prompt roots and such prefixes as anchors in a single allocation geometry: a lightweight generalizable predictor scores where contrast is still latent, and the allocator decides whether to skip a root, widen it, or branch from an unresolved prefix.

\begin{table*}[t]
  \centering
  \caption{\textbf{Performance comparison on Mathematical Reasoning.} We train on the DeepScaler corpus and evaluate Qwen3-8B and Qwen3-14B on four in-distribution mathematical reasoning benchmarks and three out-of-distribution benchmarks. TRACE consistently outperforms competitive baselines across all metrics. The best results are indicated in \textbf{bold}.}
  \vspace{-5pt}
  \renewcommand\arraystretch{1.0}
  \resizebox{0.9\textwidth}{!}{
  \setlength{\tabcolsep}{1mm}{
  \begin{tabular}{cccccccc|cccc}
      \toprule
      \multirow{2}{*}{\textbf{Model}} &\multirow{2}{*}{\textbf{Method}} &\multicolumn{6}{c}{\textbf{In-distribution}} &\multicolumn{4}{c}{\textbf{Out-of-distribution}} \\
      \cmidrule(lr){3-8} \cmidrule(lr){9-12}
      & &\textbf{AIME24} &\textbf{AMC23} &\textbf{MATH500} &\textbf{Minerva} &\textbf{Olympiad} &\textbf{Avg} &\textbf{MMLU-Pro} &\textbf{ARC-c} &\textbf{GPQA} &\textbf{Avg} \\
      \midrule
      \multirow{5}{*}{\rotatebox{90}{\textbf{Qwen3-8B}}}
          &\textbf{ReAct} &52.3 &87.0 &89.8 &37.8 &58.6 &65.1 &68.5 &95.3 &52.4 &72.1 \\
          &\textbf{GRPO} &63.6 &91.0 &92.3 &40.2 &62.8 &70.0 &72.7 &95.3 &55.9 &74.6 \\
          &\textbf{PCL} &64.0 &91.5 &92.8 &40.7 &63.0 &70.4 &72.9 &95.4 &56.2 &74.8 \\
          &\textbf{TreePO} &64.3 &91.3 &92.6 &40.9 &63.1 &70.4 &73.0 &95.2 &56.4 &74.9 \\
          \cmidrule{2-12}
          &\cellcolor{gray!20}\textbf{TRACE} &\cellcolor{gray!20}63.9 &\cellcolor{gray!20}91.2 &\cellcolor{gray!20}93.4 &\cellcolor{gray!20}41.2 &\cellcolor{gray!20}65.8 &\cellcolor{gray!20}\textbf{71.1} &\cellcolor{gray!20}73.4 &\cellcolor{gray!20}95.7 &\cellcolor{gray!20}56.9 &\cellcolor{gray!20}\textbf{75.3} \\
      \midrule
      \multirow{5}{*}{\rotatebox{90}{\textbf{Qwen3-14B}}}
          &\textbf{ReAct} &61.5 &89.4 &91.8 &40.0 &62.3 &69.0 &75.1 &96.2 &59.2 &76.8 \\
          &\textbf{GRPO} &65.6 &94.3 &94.2 &45.2 &68.4 &73.5 &75.9 &96.1 &59.4 &77.1 \\
          &\textbf{PCL} &66.2 &95.1 &95.0 &45.1 &67.6 &73.9 &76.1 &96.2 &59.7 &77.3 \\
          &\textbf{TreePO} &66.4 &95.0 &94.8 &45.4 &67.8 &74.0 &76.0 &96.4 &59.8 &77.4 \\
          \cmidrule{2-12}
          &\cellcolor{gray!20}\textbf{TRACE} &\cellcolor{gray!20}66.1 &\cellcolor{gray!20}94.7 &\cellcolor{gray!20}94.9 &\cellcolor{gray!20}47.3 &\cellcolor{gray!20}71.5 &\cellcolor{gray!20}\textbf{74.9} &\cellcolor{gray!20}76.5 &\cellcolor{gray!20}96.5 &\cellcolor{gray!20}60.4 &\cellcolor{gray!20}\textbf{77.8} \\
      \bottomrule
  \end{tabular}}}
  \label{tab:deepscaler_results}
\end{table*}

\begin{table*}[t]
  \centering
  \caption{\textbf{Performance comparison on Multi-Hop QA and Function Calling.} We train on HotpotQA and BFCL v4, and evaluate Qwen3-8B and Qwen3-14B on four multi-hop reasoning benchmarks and four tool-calling scenarios, respectively. TRACE consistently outperforms competitive baselines across all metrics. The best results are indicated in \textbf{bold}.}
  \vspace{-5pt}
  \renewcommand\arraystretch{1.0}
  \resizebox{0.8\textwidth}{!}{
  \setlength{\tabcolsep}{1mm}{
  \begin{tabular}{ccccccc|ccccc}
      \toprule
      \multirow{2}{*}{\textbf{Model}} &\multirow{2}{*}{\textbf{Method}} &\multicolumn{5}{c}{\textbf{Multi-Hop QA}} &\multicolumn{5}{c}{\textbf{Function Calling}} \\
      \cmidrule(lr){3-7} \cmidrule(lr){8-12}
      & &\textbf{Hotpot} &\textbf{2Wiki} &\textbf{Musiq} &\textbf{Bamb} &\textbf{Avg} &\textbf{Base} &\textbf{Miss-Func} &\textbf{Miss-Param} &\textbf{Long} &\textbf{Avg} \\
      \midrule
      \multirow{5}{*}{\rotatebox{90}{\textbf{Qwen3-8B}}}
          &\textbf{ReAct} &41.9 &37.6 &18.4 &44.7 &35.7 &36.7 &21.5 &31.1 &22.3 &28.0 \\
          &\textbf{GRPO} &53.2 &48.7 &27.4 &64.7 &48.5 &58.5 &31.0 &45.8 &38.7 &43.5 \\
          &\textbf{PCL} &53.6 &49.2 &27.9 &65.0 &48.9 &59.0 &32.1 &46.8 &39.3 &44.3 \\
          &\textbf{TreePO} &54.0 &49.7 &28.4 &65.9 &49.5 &58.9 &32.0 &46.5 &39.4 &44.2 \\
          \cmidrule{2-12}
          &\cellcolor{gray!20}\textbf{TRACE} &\cellcolor{gray!20}55.7 &\cellcolor{gray!20}50.9 &\cellcolor{gray!20}29.5 &\cellcolor{gray!20}66.3 &\cellcolor{gray!20}\textbf{50.6} &\cellcolor{gray!20}61.2 &\cellcolor{gray!20}34.4 &\cellcolor{gray!20}48.8 &\cellcolor{gray!20}40.4 &\cellcolor{gray!20}\textbf{46.2} \\
      \midrule
      \multirow{5}{*}{\rotatebox{90}{\textbf{Qwen3-14B}}}
          &\textbf{ReAct} &46.4 &44.2 &22.8 &55.3 &42.2 &51.1 &23.6 &35.6 &22.8 &33.6 \\
          &\textbf{GRPO} &55.3 &52.7 &30.1 &66.8 &51.2 &70.6 &32.6 &40.9 &36.4 &46.1 \\
          &\textbf{PCL} &55.6 &52.9 &30.2 &67.2 &51.5 &70.0 &31.8 &41.4 &40.4 &45.9 \\
          &\textbf{TreePO} &55.1 &55.3 &31.0 &70.6 &53.0 &70.8 &33.0 &43.2 &39.4 &46.6 \\
          \cmidrule{2-12}
          &\cellcolor{gray!20}\textbf{TRACE} &\cellcolor{gray!20}57.8 &\cellcolor{gray!20}53.1 &\cellcolor{gray!20}33.7 &\cellcolor{gray!20}71.3 &\cellcolor{gray!20}\textbf{54.0} &\cellcolor{gray!20}72.4 &\cellcolor{gray!20}34.8 &\cellcolor{gray!20}44.6 &\cellcolor{gray!20}40.2 &\cellcolor{gray!20}\textbf{48.0} \\
      \bottomrule
  \end{tabular}}}
  \label{tab:multihop_bfcl_results}
\end{table*}

\begin{figure*}[t]
  \centering
  \begin{subfigure}[t]{0.32\textwidth}
    \centering
    \includegraphics[width=\linewidth]{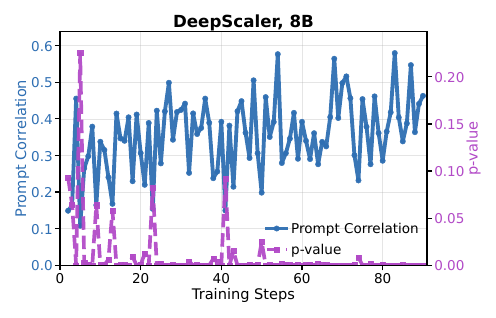}
  \end{subfigure}
  \begin{subfigure}[t]{0.32\textwidth}
    \centering
    \includegraphics[width=\linewidth]{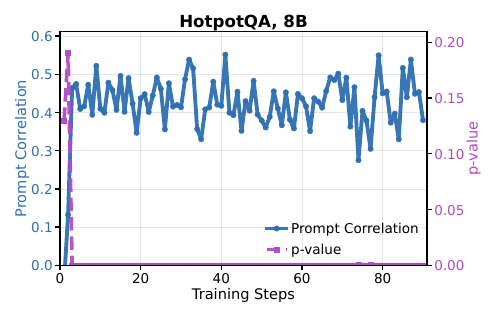}
  \end{subfigure}
  \begin{subfigure}[t]{0.32\textwidth}
    \centering
    \includegraphics[width=\linewidth]{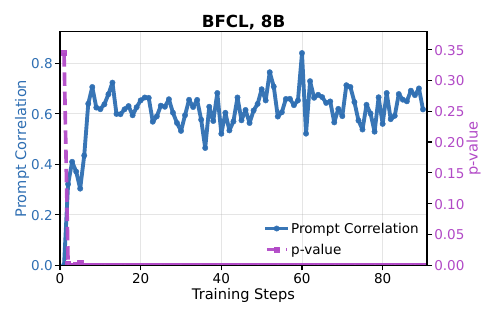}
  \end{subfigure}
  \vspace{+2pt}
  \begin{subfigure}[t]{0.32\textwidth}
    \centering
    \includegraphics[width=\linewidth]{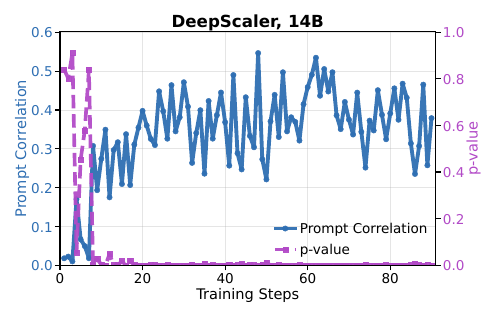}
  \end{subfigure}
  \begin{subfigure}[t]{0.32\textwidth}
    \centering
    \includegraphics[width=\linewidth]{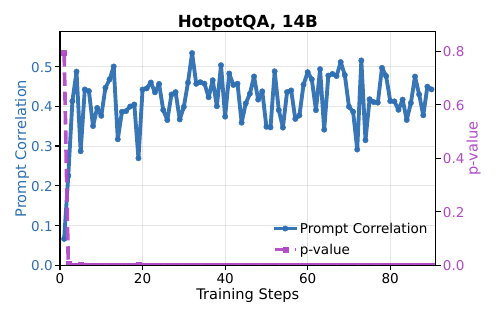}
  \end{subfigure}
  \begin{subfigure}[t]{0.32\textwidth}
    \centering
    \includegraphics[width=\linewidth]{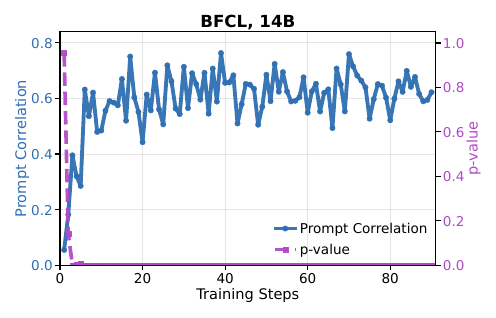}
  \end{subfigure}
  \caption{\textbf{Prompt-level prediction quality during training.} The six panels cover Mathematical Reasoning (DeepScaler), Multi-Hop QA (HotpotQA), and Function Calling (BFCL v4) with Qwen3-8B (top) and Qwen3-14B (bottom). The panels show Spearman's rank correlation between predicted prompt difficulty and empirical success rate, with the associated significance shown on the right axis.}
  \label{fig:app_prompt_spearman}
\end{figure*}

\begin{figure*}[t]
  \centering
  \begin{subfigure}[t]{0.32\textwidth}
    \centering
    \includegraphics[width=\linewidth]{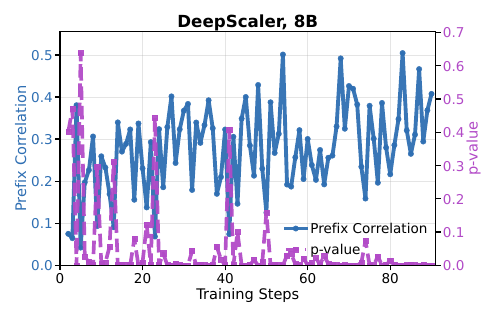}
  \end{subfigure}
  \begin{subfigure}[t]{0.32\textwidth}
    \centering
    \includegraphics[width=\linewidth]{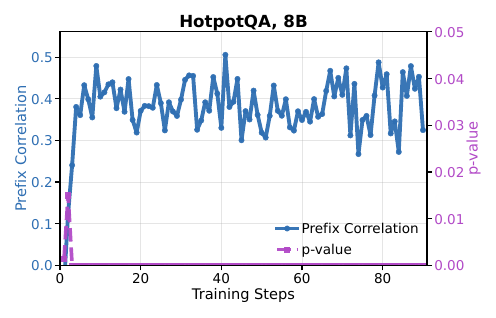}
  \end{subfigure}
  \begin{subfigure}[t]{0.32\textwidth}
    \centering
    \includegraphics[width=\linewidth]{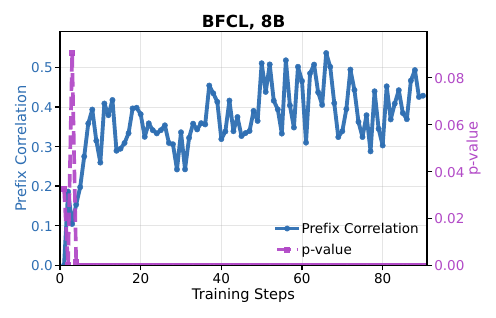}
  \end{subfigure}
  \vspace{+2pt}
  \begin{subfigure}[t]{0.32\textwidth}
    \centering
    \includegraphics[width=\linewidth]{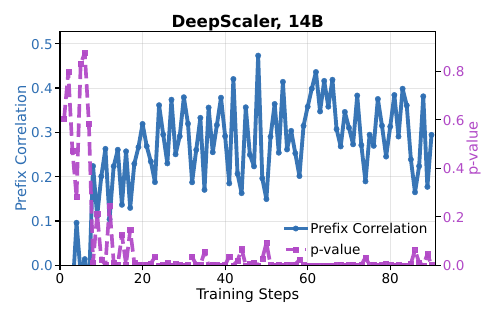}
  \end{subfigure}
  \begin{subfigure}[t]{0.32\textwidth}
    \centering
    \includegraphics[width=\linewidth]{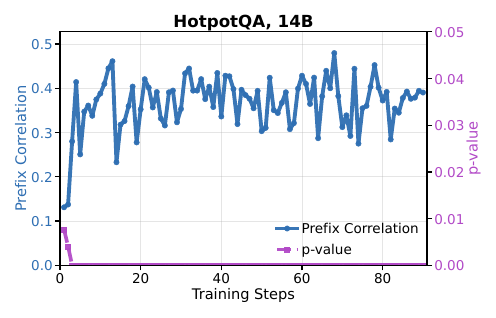}
  \end{subfigure}
  \begin{subfigure}[t]{0.32\textwidth}
    \centering
    \includegraphics[width=\linewidth]{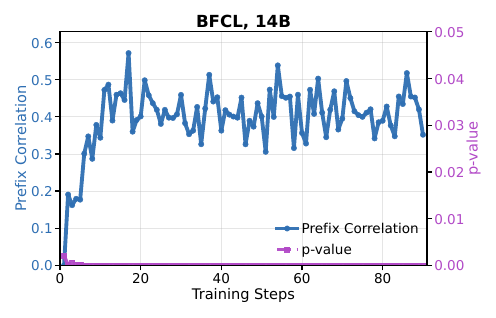}
  \end{subfigure}
  \caption{\textbf{Prefix-level prediction quality during training.} The six panels cover Mathematical Reasoning (DeepScaler), Multi-Hop QA (HotpotQA), and Function Calling (BFCL v4) with Qwen3-8B (top) and Qwen3-14B (bottom). The panels show Spearman's rank correlation between predicted prefix difficulty and empirical continuation success, with the associated significance shown on the right axis.}
  \label{fig:app_node_spearman}
\end{figure*}

\subsection{Root-to-Prefix Predictor Generalization}
\label{sec:root_prefix_predictor_generalization}

Using one shared predictor for roots and prefixes is not merely an implementation shortcut. A prefix history is an information-refined prompt: it contains the current plan, the executed actions, and the environment observations that shape what remains possible. Root-level supervision is denser because each root aggregates more terminal descendants and yields more reliable tree-backed targets, anchoring the predictor's global difficulty scale, while prefix-level supervision is sparser because fewer descendants sit below each prefix and the targets are noisier, but closer to the decisions that create local contrast. For training efficiency, predictor updates use all prompt-root examples together with only a small stream of prefix examples; nevertheless, the positive prefix-level correlations in Appendix~\ref{sec:additional_results} indicate that prefix difficulty is still ranked accurately enough for branching and that the predictor is not memorizing prompt identities. It learns a transferable history-conditioned notion of uncertainty, which is precisely the signal needed for targeted branching.

\begin{figure*}[t]
  \centering
  \begin{minipage}[t]{0.6\textwidth}
    \vspace*{0pt}
    \centering
    \captionsetup{aboveskip=0pt,belowskip=2pt}
    \captionof{table}{\textbf{Llama-3.2-3B Multi-Hop QA accuracy.} To test transfer beyond Qwen backbones, we report answer-match reward scores on four multi-hop QA benchmarks and their average.}
    \label{tab:llama3b_multihopqa_step30}
    \vspace{+10pt}
    \renewcommand\arraystretch{1.0}
    \resizebox{0.7\linewidth}{!}{
    \setlength{\tabcolsep}{1mm}{
    \begin{tabular}{cccccc}
        \toprule
        \textbf{Method} & \textbf{Hotpot} & \textbf{2Wiki} & \textbf{Musiq} & \textbf{Bamb} & \textbf{Avg} \\
        \midrule
        \textbf{ReAct} &6.7 &7.3 &2.9 &6.6 &5.9 \\
        \textbf{GRPO} &38.2 &30.5 &13.0 &32.3 &28.5 \\
        \textbf{PCL} &39.3 &32.5 &13.4 &30.5 &28.9 \\
        \textbf{TreePO} &39.7 &33.6 &13.6 &29.9 &29.2 \\
        \cmidrule{1-6}
        \cellcolor{gray!20}\textbf{TRACE} &\cellcolor{gray!20}41.0 &\cellcolor{gray!20}35.3 &\cellcolor{gray!20}15.2 &\cellcolor{gray!20}37.7 &\cellcolor{gray!20}\textbf{32.3} \\
        \bottomrule
    \end{tabular}}}
  \end{minipage}
  \hfill
  \begin{minipage}[t]{0.37\textwidth}
    \vspace*{0pt}
    \centering
    \captionsetup{aboveskip=0pt,belowskip=0pt}
    \includegraphics[width=\linewidth]{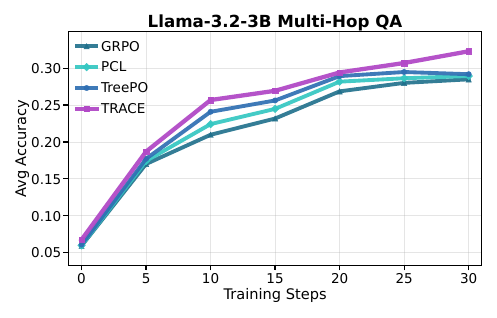}
    \vspace{-10pt}
    \caption{\textbf{Llama-3.2-3B Multi-Hop QA average accuracy.} We report the four-benchmark average over HotpotQA, 2WikiMultiHopQA, MuSiQue, and Bamboogle.}
    \label{fig:llama3b_multihopqa_accuracy}
  \end{minipage}
\end{figure*}

\begin{figure*}[t]
  \centering
  \begin{subfigure}[t]{0.32\textwidth}
    \centering
    \includegraphics[width=\linewidth]{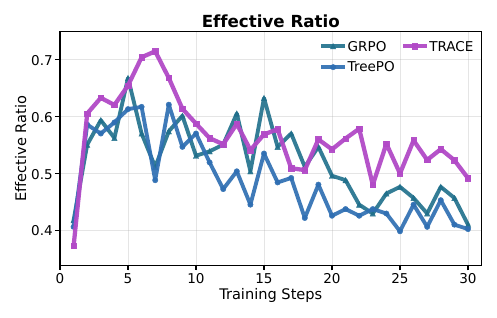}
  \end{subfigure}
  \begin{subfigure}[t]{0.32\textwidth}
    \centering
    \includegraphics[width=\linewidth]{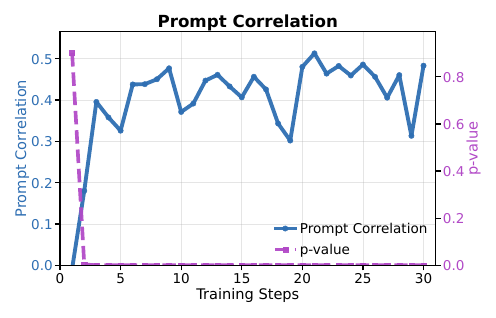}
  \end{subfigure}
  \begin{subfigure}[t]{0.32\textwidth}
    \centering
    \includegraphics[width=\linewidth]{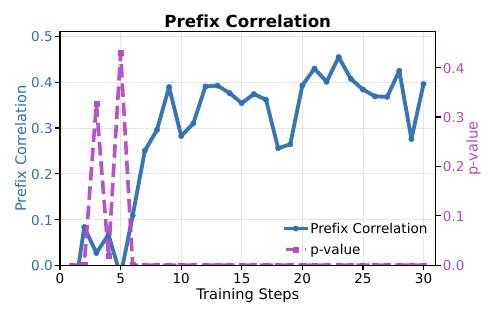}
  \end{subfigure}
  \caption{\textbf{Llama-3.2-3B allocation and predictor diagnostics on Multi-Hop QA (HotpotQA).} The panels follow Figures~\ref{fig:app_prompt_effective_ratio}--\ref{fig:app_node_spearman}, showing effective ratio, prompt-level correlation, and prefix-level correlation. Correlation panels show the predictor diagnostics.}
  \label{fig:llama3b_multihopqa_diagnostics}
\end{figure*}

\begin{figure*}[t]
  \centering
  \includegraphics[width=0.94\textwidth]{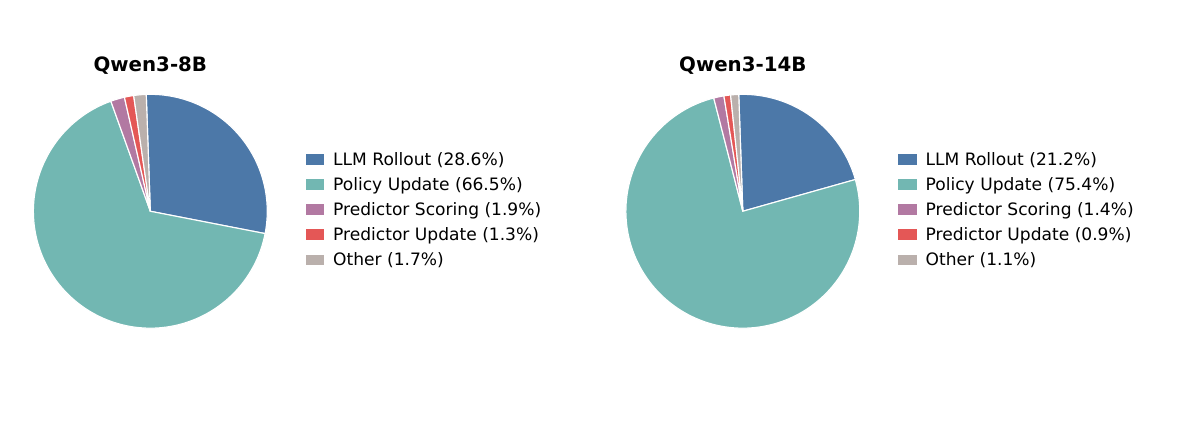}
  \vspace{-20pt}
  \caption{\textbf{Training time breakdown on Multi-Hop QA (HotpotQA).} We report average wall-clock time for TRACE with Qwen3-8B and Qwen3-14B. The current implementation uses an unoptimized predictor scoring path, while predictor parameter updates remain a small fraction of total runtime.}
  \label{fig:hotpotqa_time_breakdown}
\end{figure*}

\begin{figure*}[t]
  \centering
  \begin{subfigure}[t]{0.48\textwidth}
    \centering
    \includegraphics[width=\linewidth]{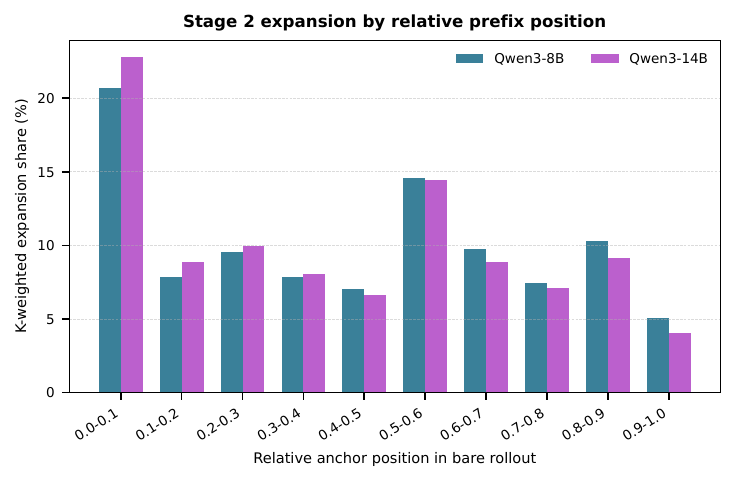}
    \caption{Relative prefix position}
  \end{subfigure}
  \hfill
  \begin{subfigure}[t]{0.48\textwidth}
    \centering
    \includegraphics[width=\linewidth]{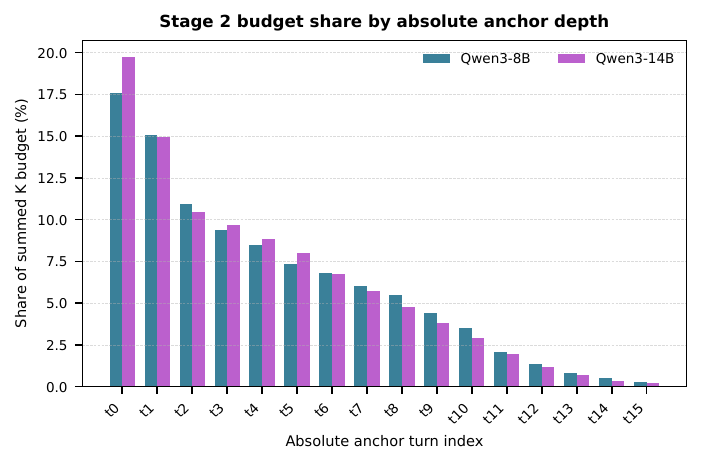}
    \caption{Absolute anchor depth}
  \end{subfigure}
  \caption{\textbf{Stage~2 allocation behavior on Function Calling (BFCL v4).} We average over the common late training window. The left panel bins anchors by relative position within their bare rollout and weights each anchor by the assigned continuation count $K_{i,j,t}$. The right panel reports the share of the total Stage~2 continuation budget assigned to each absolute anchor depth.}
  \label{fig:bfcl_stage2_allocation_behavior}
\end{figure*}

\begin{figure*}[t]
  \centering
  \begin{subfigure}[t]{0.48\textwidth}
    \centering
    \includegraphics[width=\linewidth]{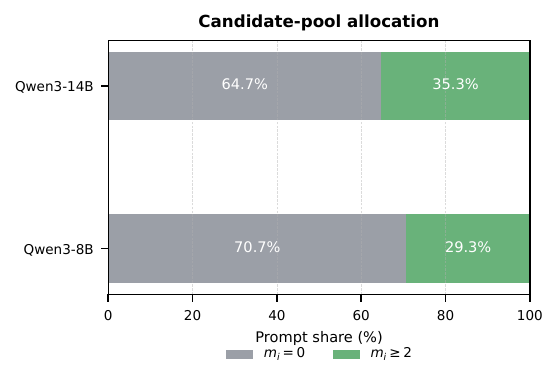}
    \caption{Candidate-pool allocation}
  \end{subfigure}
  \hfill
  \begin{subfigure}[t]{0.48\textwidth}
    \centering
    \includegraphics[width=\linewidth]{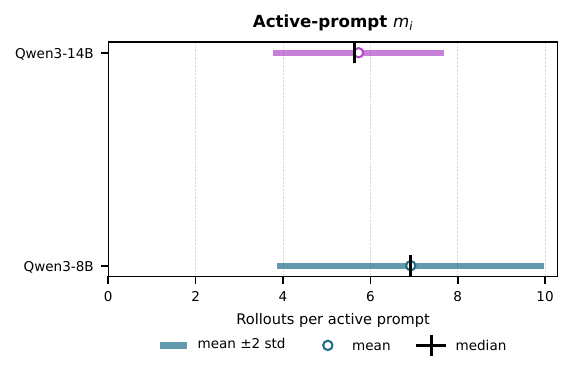}
    \caption{Active-prompt root counts}
  \end{subfigure}
  \caption{\textbf{Stage~1 root allocation summary on Function Calling (BFCL v4).} Panel (a) shows the fraction of candidate prompts receiving $m_i=0$ or $m_i\ge2$; by construction TRACE uses $m_i\in\{0\}\cup\{2,3,\ldots\}$, so $m_i=1$ has zero mass. Panel (b) shows active-prompt $m_i$ mean, median, and two-standard-deviation range over the same late training window. Exact per-integer counts within the active set require logging the full integer histogram of $m_i$.}
  \label{fig:bfcl_stage1_mi_summary}
\end{figure*}

\section{Extended Experimental Results}
\label{sec:additional_results}

\subsection{Detailed Benchmark Results}
\label{sec:app_detailed_results}

Figure~\ref{fig:hotpotqa_accuracy} shows training curves with respect to optimization steps, while Tables~\ref{tab:deepscaler_results} and~\ref{tab:multihop_bfcl_results} summarize the final endpoint performance. Overall, TRACE enables more effective RLVR training under the same rollout-budget setting. Compared with GRPO, TRACE improves the average scores by 0.7--2.8 points across Mathematical Reasoning, Multi-Hop QA, and Function Calling scenarios. Compared with random TreePO allocation, TRACE also consistently improves the reported averages, indicating that the gains come from learned allocation rather than from introducing a tree substrate alone. These improvements come with negligible predictor overhead, analyzed in Appendix~\ref{sec:app_computational_overhead}.

\subsection{Effective Ratio}

For the root or prefix stage $q\in\{\mathrm{root},\mathrm{pref}\}$, we use the anchor set $\mathcal{A}_q$, descendant set $\mathcal{D}_h(b_h)$, and activation event $\mathsf{Act}(h,b_h)$ from Eq.~\eqref{eq:local_gradient_contribution}. The effective ratio is the fraction of sampled terminal descendants attached to activated anchors:
\begin{equation}
  \hspace{-0.8em}
  \mathrm{Eff}_q(b)
  =
  \frac{
  \sum_{h\in\mathcal{A}_q}
  |\mathcal{D}_h(b_h)|\,
  \mathbb{I}\{\mathsf{Act}(h,b_h)\}}
  {\sum_{h\in\mathcal{A}_q}|\mathcal{D}_h(b_h)|}.
  \label{eq:effective_ratio_metric}
\end{equation}
Higher values indicate that more sampled trajectories can produce non-degenerate group-relative or local contrast. In our diagnostic figures and ablation tables, we report the prompt-level instantiation because it is shared by flat and tree rollouts. For a training batch $\mathcal{B}_s$, let $\mathcal{D}_i:=\mathcal{D}_{x_i}(b_{x_i})$ be the sampled terminal-descendant set below prompt root $x_i$ (Eq.~\eqref{eq:local_gradient_contribution}), and let $R_i=\{r(\mathcal{H}):\mathcal{H}\in\mathcal{D}_i\}$ denote their terminal rewards. We compute
\begin{equation}
  \mathrm{Eff}_{\mathrm{root}}(\mathcal{B}_s)
  =
  \frac{1}{|\mathcal{B}_s|}
  \sum_{x_i\in\mathcal{B}_s}
  \mathbb{I}
  \left[\mathrm{Var}(R_i)>0\right].
  \label{eq:prompt_effective_ratio_metric}
\end{equation}
Here, $\mathbb{I}[\cdot]$ is the indicator function. This metric measures the fraction of prompt-root groups that contain both successful and failed outcomes. It is an end-to-end batch informativeness measure: root allocation and useful prefix expansion can both increase the chance that a prompt produces mixed outcomes.

Figure~\ref{fig:app_prompt_effective_ratio} shows that TRACE consistently produces more informative prompt-root groups than GRPO and random TreePO allocation. This metric isolates whether the prompts entering an update yield non-degenerate outcome variation, so higher values indicate that learned allocation supplies denser group-relative signal under the same rollout budget.

\subsection{Spearman Correlation}

To evaluate the quality of difficulty prediction, we report Spearman's rank correlation coefficient~\citep{sedgwick2014spearman}. It measures the strength of the monotonic relationship between predicted and empirical difficulty rankings and is invariant to monotonic transformations, making it the right metric for allocation, where relative ordering matters more than calibration. For a root-level or prefix-level evaluation set $\mathcal{S}_q$, where $q\in\{\mathrm{root},\mathrm{pref}\}$, let $R_\psi(y)$ and $R_{\widehat V}(y)$ be the ranks of the predictor score $\tilde V_\psi(y)$ from Eq.~\eqref{eq:predictor_interface} and the tree-backed empirical target $\widehat V(y)$ from Eq.~\eqref{eq:node_value_recursive} within $\mathcal{S}_q$. We compute
\begin{equation}
  \begin{aligned}
  \rho_q
  &=
  \frac{
  \operatorname{cov}_{y\in\mathcal{S}_q}
  \!\left(R_\psi(y),R_{\widehat V}(y)\right)}
  {
  \operatorname{std}_{y\in\mathcal{S}_q}\!\left(R_\psi(y)\right)
  \operatorname{std}_{y\in\mathcal{S}_q}\!\left(R_{\widehat V}(y)\right)
  } .
  \end{aligned}
  \label{eq:spearman_metric}
\end{equation}
We also report the corresponding $p$-value under the null hypothesis that $\tilde V_\psi(y)$ and $\widehat V(y)$ are independent over $y\in\mathcal{S}_q$, which assesses whether the observed rank agreement is statistically significant.

Figures~\ref{fig:app_prompt_spearman} and~\ref{fig:app_node_spearman} show that the predictor learns stable difficulty rankings at both levels. Prompt-level correlation rises first because roots provide the densest supervision. Prefix-level correlation remains positive as well, even though predictor training is dominated by prompt-level examples and uses only limited prefix-level supervision to expose the history format. This transfer is consistent with the root-to-prefix generalization view in Appendix~\ref{sec:root_prefix_predictor_generalization}, suggesting that the predictor is not memorizing prompt IDs but naturally extending a root-trained difficulty model to serialized interaction histories, which is exactly the signal needed for targeted branching.

\subsection{Applicability to Other Model Families}

We further test TRACE on Llama-3.2-3B-Instruct to examine whether the allocation interface in Section~\ref{sec:method} transfers beyond the Qwen3 family. Consistent with the Multi-Hop QA trend in Section~\ref{sec:main_results}, Table~\ref{tab:llama3b_multihopqa_step30} shows that TRACE achieves the best average accuracy, improving over GRPO by 3.8 points and over random TreePO allocation by 3.1 points. The gains appear across HotpotQA, 2WikiMultiHopQA, MuSiQue, and Bamboogle, suggesting that the benefit is not tied to a single benchmark instance.

Figure~\ref{fig:llama3b_multihopqa_accuracy} shows that the advantage emerges during training rather than from a stronger starting point. Figure~\ref{fig:llama3b_multihopqa_diagnostics} provides the corresponding allocation diagnostics, showing that TRACE maintains a higher effective ratio and that the predictor preserves useful prompt- and prefix-level rankings. Together, these results support the central design claim that TRACE is a portable root-prefix allocation layer rather than a policy-backbone-specific heuristic.

\subsection{Computational Overhead}
\label{sec:app_computational_overhead}

As a representative case, we profile Multi-Hop QA training runs on HotpotQA to quantify where wall-clock time is spent. Figure~\ref{fig:hotpotqa_time_breakdown} decomposes the runtime into LLM rollout, policy update, predictor scoring, predictor update, and residual overhead. The decomposition uses nonoverlapping critical path timers in which predictor scoring is subtracted from trajectory collection and predictor update is separated from trajectory transformation. This avoids double counting nested diagnostic timers.

The dominant cost is still policy optimization, which takes 66.5\% of the Qwen3-8B runtime and 75.4\% of the Qwen3-14B runtime. The learnable predictor update is small, accounting for 1.3\% and 0.9\% of total runtime. Predictor scoring during root and prefix allocation accounts for 1.9\% and 1.4\% of total runtime. Because TRACE uses the same Qwen3-0.6B predictor for both policy scales, total predictor overhead decreases from 3.2\% on Qwen3-8B to 2.3\% on Qwen3-14B, while the policy update becomes increasingly dominant. These results indicate that TRACE trades a negligible allocation cost for more informative rollouts, and that further engineering improvements can primarily target batched prefix scoring.

\subsection{Allocation Behavior}
\label{sec:app_allocation_behavior}

We inspect Function Calling (BFCL v4) as a representative allocation trace because its longer function-calling episodes make turn-wise prefix allocation smoother than in shorter QA rollouts. Figure~\ref{fig:bfcl_stage2_allocation_behavior} shows two complementary views of Stage~2. The relative-position view normalizes each anchor by the length of its bare rollout, while the absolute-depth view reports the share of continuation budget assigned to each turn index. In Figure~\ref{fig:bfcl_stage2_allocation_behavior}(a), the budget-weighted prefix position remains near the middle of the bare rollout, matching the half-trajectory continuation approximation used in the budget accounting of Appendix~\ref{app:init_expand}. The absolute-depth view further shows that nontrivial budget remains across later prefixes up to depth 15, indicating that Stage~2 is not merely adding extra root-like samples.

Figure~\ref{fig:bfcl_stage1_mi_summary} summarizes Stage~1 root allocation over the candidate prompt pool. Most candidates receive $m_i=0$ and are skipped, while active prompts receive multiple bare rollouts, typically around five to seven in Function Calling (BFCL v4). The current logs record exact inactive/active ratios and active-set quantiles; per-integer counts within the active set require logging the full integer histogram of $m_i$.

\section{Theoretical Proof}
\label{sec:proof}

\subsection{Proof of Proposition~\ref{prop:prefix_predictability}}
\label{sec:prefix_predictability_proof}
\begin{proof}
Conditioned on $\mathcal{H}_t$, the $m$ terminal rewards in $\bar R_{t,m}$ are independent Bernoulli variables with mean $V_t^\pi$. Hence
\begin{equation}
  \mathbb{E}[\bar R_{t,m}\mid\mathcal{H}_t]
  =
  V_t^\pi,
\end{equation}
so the Bayes-optimal predictor of group correctness is $V_t^\pi$. Its optimal risk is
\begin{equation}
  \mathcal{E}_{t,m}^\star
  =
  \frac{1}{m}
  \mathbb{E}\big[V_t^\pi(1-V_t^\pi)\big].
\end{equation}
Because $V_t^\pi=\mathbb{E}[V_{t+1}^\pi\mid\mathcal{H}_t]$, the law of total variance gives
\begin{equation}
  \mathbb{E}\big[V_{t+1}^\pi(1-V_{t+1}^\pi)\big]
  \le
  \mathbb{E}\big[V_t^\pi(1-V_t^\pi)\big].
\end{equation}
Therefore,
\begin{equation}
  \hspace*{-0.4em}
  \mathcal{E}_{t,m}^\star-\mathcal{E}_{t+1,m}^\star
  =
  \frac{1}{m}
  \mathbb{E}\big[\operatorname{Var}(V_{t+1}^\pi\mid\mathcal{H}_t)\big]
  \ge 0.
\end{equation}
The inequality is strict exactly when the next prefix changes the conditional success probability with positive probability under $\mathcal{H}_t$. Summing the one-step inequalities gives $\mathcal{E}_{t,m}^\star\le\mathcal{E}_{0,m}^\star$ for every $t$.
\end{proof}

\subsection{Proof of Proposition~\ref{prop:energy_identity}}
\label{sec:energy_identity_proof}
\begin{proof}
Let $Z_t := V_t^\pi = \mathbb{E}_{\pi}\big[r(\mathcal{H}_T) \mid \mathcal{F}_t\big]$, where $\mathcal{F}_t:=\sigma(\mathcal{H}_t)$. Since $(Z_t)$ is a Doob martingale, we have $\mathbb{E}_{\pi}[Z_{s+1}\mid \mathcal{F}_s]=Z_s$ and therefore $\mathbb{E}_{\pi}[A_s^\pi\mid \mathcal{F}_s]=0$ for $A_s^\pi:=Z_{s+1}-Z_s$. Using
\begin{equation}
  Z_{s+1}^2 - Z_s^2 = (Z_{s+1}-Z_s)^2+2Z_s(Z_{s+1}-Z_s),
\end{equation}
and summing from $s=t$ to $T-1$, we obtain
\begin{equation}
  Z_T^2 - Z_t^2
  = \sum_{s=t}^{T-1}(A_s^\pi)^2 + 2\sum_{s=t}^{T-1}Z_sA_s^\pi.
\end{equation}
Taking conditional expectation with respect to $\mathcal{F}_t$, the second term vanishes by the tower property:
\begin{equation}
  \mathbb{E}_{\pi}[Z_sA_s^\pi\mid \mathcal{F}_t] =
  \mathbb{E}_{\pi}\big[
  \mathbb{E}_{\pi}[Z_sA_s^\pi\mid \mathcal{F}_s]
  \mid \mathcal{F}_t
  \big] =
  \mathbb{E}_{\pi}\big[
  Z_s\,\mathbb{E}_{\pi}[A_s^\pi\mid \mathcal{F}_s]
  \mid \mathcal{F}_t
  \big] = 0.
\end{equation}
Hence
\begin{equation}
  \mathbb{E}_{\pi}\left[[Z]_{t:T}\mid \mathcal{F}_t\right]
  = \mathbb{E}_{\pi}[Z_T^2\mid \mathcal{F}_t] - Z_t^2.
\end{equation}
Because the terminal reward is binary, $Z_T=r(\mathcal{H}_T)\in\{0,1\}$ and thus $Z_T^2=Z_T$. Therefore,
\begin{equation}
  \mathbb{E}_{\pi}[Z_T^2\mid \mathcal{F}_t]
  = \mathbb{E}_{\pi}[Z_T\mid \mathcal{F}_t]
  = Z_t,
\end{equation}
which yields
\begin{equation}
  \mathbb{E}_{\pi}\left[[Z]_{t:T}\mid \mathcal{F}_t\right]
  = Z_t - Z_t^2
  = V_t^\pi\bigl(1-V_t^\pi\bigr).
\end{equation}
This proves Eq.~\eqref{eq:energy_identity}.
\end{proof}

\subsection{Proof of Proposition~\ref{prop:trace_dominance}}
\label{app:allocation_dominance}

\begin{proof}
Fix an anchor $a$ and a budget decision $b$. Let $\mathsf{Act}(a,b)$ denote the event that the sampled descendants below $a$ contain at least one successful and one failed terminal history.

\noindent\textbf{Step 1: outcome contrast is the activation event.}
For a root group of size $m$, activation means that the group is non-degenerate:
\begin{equation}
  \Pr(\mathsf{Act}(a,m)\mid a)
  =
  1-p_a^m-(1-p_a)^m.
  \label{eq:app_root_activation}
\end{equation}
This is Eq.~\eqref{eq:root_esr_value} after replacing $p_a$ by the predictor. For a visited prefix occurrence with factual reward $r$, a new continuation matches the factual outcome with probability
\begin{equation}
  q_a(r)=rp_a+(1-r)(1-p_a).
  \label{eq:app_prefix_match}
\end{equation}
With $k$ new continuations, activation means at least one opposite-outcome sibling appears:
\begin{equation}
  \Pr(\mathsf{Act}(a,k)\mid a,r)
  =
  1-q_a(r)^k,
  \label{eq:app_prefix_activation}
\end{equation}
which is Eq.~\eqref{eq:pref_utility_practical}.

\noindent\textbf{Step 2: common optimizers have zero local gradient without activation.}
For pairwise Bradley--Terry or DPO-style losses, an informative pair consists of two terminal histories $\mathcal{H}^+,\mathcal{H}^-$ below $a$ with different rewards. Let
\[
  \ell_\theta(\mathcal{H}\mid a)
  =
  \log\frac{\pi_\theta(\mathcal{H}\mid a)}
           {\pi_{\mathrm{ref}}(\mathcal{H}\mid a)},
  \qquad
  \Delta_\theta
  =
  \ell_\theta(\mathcal{H}^+\mid a)
  -
  \ell_\theta(\mathcal{H}^-\mid a).
\]
Then the Bradley--Terry gradient is
\begin{equation}
  \nabla_\theta\log\sigma(\beta\Delta_\theta)
  =
  \beta\sigma(-\beta\Delta_\theta)
  \nabla_\theta\Delta_\theta
  =
  \beta\sigma(-\beta\Delta_\theta)
  \big[
  \nabla_\theta\log\pi_\theta(\mathcal{H}^+\mid a)
  - \nabla_\theta\log\pi_\theta(\mathcal{H}^-\mid a)
  \big]
  \label{eq:app_bt_gradient}
\end{equation}
If all sampled rewards below $a$ are identical, no such success--failure pair exists and the local pairwise contribution is zero.

For group-relative updates, let $\mathcal{H}_1,\ldots,\mathcal{H}_m$ be sampled descendants with rewards $R_s\in\{0,1\}$ and mean $\bar R$. Ignoring clipping and constants, the local group-relative gradient contribution is
\begin{equation}
  G_{\mathrm{grp}}(a)
  =
  \sum_{s=1}^{m}
  (R_s-\bar R)
  \nabla_\theta\log\pi_\theta(\mathcal{H}_s\mid a).
  \label{eq:app_group_grad}
\end{equation}
For any vectors $g_1,\ldots,g_m$, the algebraic identity
\begin{equation}
  \sum_{s=1}^{m}(R_s-\bar R)g_s
  =
  \frac{1}{m}\sum_{u<v}(R_u-R_v)(g_u-g_v),
  \label{eq:app_pair_decomp}
\end{equation}
holds. Taking $g_s=\nabla_\theta\log\pi_\theta(\mathcal{H}_s\mid a)$ shows that $G_{\mathrm{grp}}(a)$ is a sum of success--failure score differences; if the group is all-success or all-failure, every coefficient vanishes.

Now let $\widetilde G_a(b)$ denote the local gradient shape that the chosen optimizer would produce from the available success--failure contrasts below anchor $a$, and let $G_a(b)$ be the realized local gradient contribution. The pairwise and group-relative arguments above show that no such contribution is available when no opposite-outcome descendants are observed. Tree-aware optimizers only redistribute such leaf-level contrast through the tree, so they also have zero local contrast contribution below $a$ when $\mathsf{Act}(a,b)$ fails. Hence, for all optimizer classes considered here,
\begin{equation}
  G_a(b)
  =
  \mathbb{I}\{\mathsf{Act}(a,b)\}\widetilde G_a(b).
  \label{eq:app_gradient_activation}
\end{equation}

\noindent\textbf{Step 3: activation factors the expected squared gradient norm.}
Taking squared norms and conditional expectation in Eq.~\eqref{eq:app_gradient_activation} gives
\begin{equation}
  \mathbb{E}\!\left[\|G_a(b)\|^2\mid a\right]
  =
  \Pr(\mathsf{Act}(a,b)\mid a) \cdot
  \mathbb{E}\!\left[
  \|\widetilde G_a(b)\|^2
  \mid \mathsf{Act}(a,b),a
  \right].
  \label{eq:app_gradient_norm_factorization}
\end{equation}
By definition, $V_a(b)$ denotes the appropriate activation probability: $\Pr(\mathsf{Act}(a,b)\mid a)$ for roots and $\Pr(\mathsf{Act}(a,b)\mid a,r)$ for prefix anchors with factual reward $r$. If the conditional squared gradient norm lies in $[c_-,c_+]$ across candidate anchors, then the expected squared local gradient norm is bounded between $c_-V_a(b)$ and $c_+V_a(b)$.

\noindent\textbf{Step 4: dominance of local gradient energy.}
For a stage $q\in\{\mathrm{root},\mathrm{pref}\}$, let $h$ index the candidate anchors in that stage and let $b_h$ be the budget assigned to anchor $h$. Step 3 gives the total expected squared local gradient norm in the form
\[
  \mathcal{E}_q(b)
  =
  \sum_h V_h(b_h) C_h(b_h),
\]
where $C_h(b_h)$ is the conditional squared gradient scale after activation. This factor is optimizer- and anchor-dependent, while $V_h(b_h)$ is exactly the activation probability controlled by rollout allocation. Under the normalized-gradient condition in Proposition~\ref{prop:trace_dominance}, $C_h(b_h)=c_q$ for every anchor in stage $q$. TRACE therefore optimizes the gradient-energy component
\begin{equation}
  \mathcal{J}_q(b)
  :=
  \sum_{h}V_h(b_h),
  \label{eq:app_generic_allocation}
\end{equation}
where $V_h$ is Eq.~\eqref{eq:root_esr_value} for $q=\mathrm{root}$ and Eq.~\eqref{eq:pref_utility_practical} for $q=\mathrm{pref}$. TRACE solves the resulting combinatorial optimization over the feasible set $\mathcal{F}_q$. If $b_q^\star\in\arg\max_{b\in\mathcal{F}_q}\mathcal{J}_q(b)$ and $b_q^{u}\in\mathcal{F}_q$ is a uniform baseline, then
\begin{equation}
  \begin{gathered}
  \mathcal{E}_q(b_q^\star)
  =
  c_q\mathcal{J}_q(b_q^\star)
  \ge
  c_q\mathcal{J}_q(b_q^u)
  =
  \mathcal{E}_q(b_q^u).
  \end{gathered}
\end{equation}
This proves the first line of Eq.~\eqref{eq:trace_dominates_uniform} for both $q=\mathrm{root}$ and $q=\mathrm{pref}$.

For the combined tree update, write $G_r:=G_{\mathrm{root}}(b_{\mathrm{root}})$ and $G_p:=G_{\mathrm{pref}}(b_{\mathrm{pref}})$. Expanding the squared norm gives
\[
  \mathbb{E}\|G_r+G_p\|^2
  =
  \mathcal{E}_{\mathrm{root}}(b_{\mathrm{root}})
  +
  \mathcal{E}_{\mathrm{pref}}(b_{\mathrm{pref}})
  +
  2\mathbb{E}\langle G_r,G_p\rangle.
\]
The two energy terms are no smaller under TRACE by the stage-wise result above. The cross-term condition in Proposition~\ref{prop:trace_dominance} states that TRACE does not reduce $\mathbb{E}\langle G_r,G_p\rangle$ relative to uniform, so the full expected squared gradient norm is also no smaller. Strictness holds for any stage whose uniform allocation is not an optimizer of $\mathcal{J}_q$.
This proves Proposition~\ref{prop:trace_dominance}.
\end{proof}

\section{Experimental Details}
\label{sec:experimental_details}

\subsection{Tasks}

\subsubsection{Mathematical Reasoning}

\paragraph{Training dataset.}
We train on the DeepScaler~\citep{luo2025deepscaler} corpus, a collection of 40,315 competition-level mathematics problems. We use the public DeepScaleR preview dataset hosted at \url{https://huggingface.co/datasets/agentica-org/DeepScaleR-Preview-Dataset}. Each rollout is executed by a tool-augmented agent with access to a Python interpreter for calculation and verification.

\paragraph{Evaluation benchmarks.}
We evaluate in-distribution mathematical reasoning on AIME24, AMC23, MATH500~\citep{lightman2024let}, MinervaMath~\citep{lewkowycz2022solving}, and OlympiadBench~\citep{he2024olympiadbench}, using the math evaluation sets hosted at \url{https://huggingface.co/datasets/math-ai}. Per-dataset scores use the same sampling multiplicities as our evaluation scripts: AIME24 is reported as avg@32, AMC23 as avg@16, and MATH500, MinervaMath, and OlympiadBench as avg@4. To test out-of-distribution generalization, we additionally evaluate on MMLU-Pro~\citep{wang2024mmlu}, ARC-Challenge~\citep{clark2018think}, and GPQA-diamond~\citep{rein2023gpqa}, reported as avg@1, avg@4, and avg@8 respectively. The ``Avg'' columns in the tables are arithmetic averages across the corresponding benchmark columns.

\paragraph{Reward function.}
We use a binary verifier reward: a trajectory receives reward $1$ if the final extracted answer matches the ground truth and $0$ otherwise. The same reward interface is used for base-model evaluation and RL checkpoints.

\subsubsection{Multi-Hop QA}

\paragraph{Training dataset.}
We train on the training split of HotpotQA~\citep{yang2018hotpotqa}, using the HuggingFace version hosted at \url{https://huggingface.co/datasets/hotpotqa/hotpot_qa}. The agent follows a ReAct-style search-and-answer loop, issuing search queries to a local E5 retrieval server~\citep{wang2022text} built over a Wikipedia dump~\citep{karpukhin2020dense}. This keeps retrieval deterministic and avoids external search-engine drift during RL training.

\paragraph{Evaluation benchmarks.}
We evaluate on the HotpotQA validation split, 2WikiMultiHopQA~\citep{ho2020constructing}, MuSiQue~\citep{trivedi2022musique}, and Bamboogle~\citep{press2023measuring}. The additional held-out sets are prepared from the FlashRAG benchmark repository at \url{https://huggingface.co/datasets/RUC-NLPIR/FlashRAG_datasets}, with canonical sources at \url{https://github.com/Alab-NII/2wikimultihop}, \url{https://github.com/StonyBrookNLP/musique}, and \url{https://huggingface.co/datasets/chiayewken/bamboogle}. These datasets stress different forms of multi-hop evidence aggregation, from entity chaining to compositional question decomposition. Multi-Hop QA evaluation uses one sampled trajectory per question, so each per-dataset score is avg@1. The reported ``Avg'' is the arithmetic average over the four QA benchmarks when per-checkpoint evaluation is available.

\paragraph{Reward function.}
We use the implemented HotpotQA-style search reward as both the training reward and evaluation score. The evaluator first extracts a concise final answer and normalizes it following SQuAD/HotpotQA conventions. An exact match receives reward $1$; otherwise, the evaluator computes token-level F1 against all reference answers, treats predictions with F1 at least $0.3$ as partially correct, and assigns reward equal to the F1 score. Predictions below this threshold receive reward $0$.

\subsubsection{Function Calling}

\paragraph{Dataset.}
We use the multi-turn split of BFCL v4~\citep{patil2025berkeley}, rather than the full BFCL v4 suite, which also contains many single-turn, live, memory, and web-search categories. We use the four agentic multi-turn subsets from \url{https://huggingface.co/datasets/gorilla-llm/Berkeley-Function-Calling-Leaderboard}: base, long-context, missing-function, and missing-parameter. We split these multi-turn examples into an 80\% training portion and a 20\% held-out test portion. The agent observes API specifications, calls functions over multiple turns, and receives environment feedback after each call.

\paragraph{Evaluation protocol.}
We follow the official BFCL evaluation protocol and report success rates over the base, missing-function, missing-parameter, and long-context subsets. Validation uses four sampled trajectories per example, so subset scores are reported as avg@4. The reported ``Avg'' is the arithmetic average over the four multi-turn BFCL subsets. A trajectory is correct only when the executed API sequence and final task outcome satisfy the evaluator.

\subsection{Models}

All main experiments use Qwen3-8B and Qwen3-14B~\citep{yang2025qwen3} policy backbones, loaded from the official Qwen model family. We additionally run Multi-Hop QA experiments with Llama-3.2-3B-Instruct~\citep{grattafiori2024llama} as an extra backbone in Appendix~\ref{sec:additional_results}. The ReAct baseline evaluates the corresponding base or instruction-tuned model in the same multi-turn agent scaffold without RL fine-tuning. TRACE uses a lightweight Qwen3-0.6B~\citep{yang2025qwen3} model as the prefix predictor. The relevant model repositories are:
\begin{itemize}[leftmargin=10pt]
    \item Qwen3-8B: \url{https://huggingface.co/Qwen/Qwen3-8B};
    \item Qwen3-14B: \url{https://huggingface.co/Qwen/Qwen3-14B};
    \item Llama-3.2-3B-Instruct: \url{https://huggingface.co/meta-llama/Llama-3.2-3B-Instruct};
    \item Qwen3-0.6B predictor backbone: \url{https://huggingface.co/Qwen/Qwen3-0.6B}.
\end{itemize}

\subsection{Training Details}

All RL experiments are implemented in the rLLM/verl~\citep{sheng2024hybridflow,rllm2025} stack with asynchronous vLLM rollout workers. Unless otherwise stated, actor optimization uses Adam with learning rate $1\mathrm{e}{-6}$, no KL penalty, sequence-mean token aggregation, gradient clipping at $1.0$, and Clip-Higher ranges $\epsilon_{\mathrm{low}}=0.2$ and $\epsilon_{\mathrm{high}}=0.28$. We use FSDP with parameter and optimizer offloading for the actor and reference model. Main training runs use two nodes with eight GPUs per node.

The environment horizon and sequence lengths are task-specific. For Multi-Hop QA, we use maximum prompt length 2048, maximum response length 6144, and at most 5 agent turns. For Mathematical Reasoning (DeepScaler), we use maximum prompt length 2048, maximum response length 14336, and at most 5 agent turns. For Function Calling (BFCL v4), we use maximum prompt length 12800, maximum response length 8192, and at most 40 agent turns to accommodate long multi-turn tool-use traces.

Rollout sampling follows each benchmark's baseline setting. Multi-Hop QA uses temperature $1.0$ and top-$p=1.0$ during training. Mathematical Reasoning (DeepScaler) uses temperature $0.6$ and top-$p=0.95$, matching the math evaluation style. Function Calling (BFCL v4) uses temperature $0.9$ and top-$p=1.0$.

\subsection{Policy Optimizers and TreePO Instances}
\label{app:treegrpo_instance}

TRACE is optimizer-agnostic and outputs tree-structured rollout data that can be consumed by different tree-aware RL objectives. In the flat baseline, GRPO normalizes terminal rewards within a response group and removes the value network used in PPO. Let $\mathcal{B}_{\mathrm{G}}$ denote $x\sim\mathcal{D}$ and $\{y_i\}_{i=1}^{k}\sim\pi_{\theta_{\mathrm{old}}}(\cdot\mid x)$, and write $\bar\rho=\mathrm{clip}(\rho,1-\epsilon_{\mathrm{low}},1+\epsilon_{\mathrm{high}})$. GRPO optimizes\par
\begin{equation}
  \mathcal{J}_{\mathrm{GRPO}}(\theta)=
  \mathbb{E}_{\mathcal{B}_{\mathrm{G}}}
  \Bigg[\frac{1}{k}\sum_{i=1}^{k}\frac{1}{|y_i|}\sum_{t=1}^{|y_i|}\bigg(
  \min\Big(\rho_{i,t}\hat A_i,\bar\rho_{i,t}\hat A_i\Big)
  -\beta D_{\mathrm{KL}}(\pi_{\theta}\Vert\pi_{\mathrm{ref}})\bigg)\Bigg],
  \label{eq:app_grpo_objective}
\end{equation}
where $\rho_{i,t}=\frac{\pi_{\theta}(y_{i,t}\mid x,y_{i,<t})}{\pi_{\theta_{\mathrm{old}}}(y_{i,t}\mid x,y_{i,<t})}$ and
\begin{equation}
  \hat A_i =
  \frac{r_i-\mathrm{mean}(\{r_j\}_{j=1}^{k})}
  {\mathrm{std}(\{r_j\}_{j=1}^{k})}.
  \label{eq:app_grpo_advantage}
\end{equation}

For tree-structured training, we use two concrete optimizers on the same rollout-tree batch $\mathcal{B}_{T}$, which samples $x\sim\mathcal{D}$ and a rollout tree $\mathcal{G}(x)$ from $\pi_{\theta_{\mathrm{old}}}$ under the initialize-then-expand construction in \S\ref{sec:trace_tree_construction}. For stable training, Mathematical Reasoning (DeepScaler) and Multi-Hop QA (HotpotQA) use TreeRPO~\citep{yang2025treerpo}, while Function Calling (BFCL v4) uses Tree-GRPO~\citep{ji2025tree}. For any tree node $y$, let $\mathcal{C}(y)$ denote its child set as in Eq.~\eqref{eq:node_value_recursive} and let $R(y)$ be the scalar reward backed up by the chosen optimizer; at a terminal leaf with history $\mathcal{H}_T$, we set $R(y)=r(\mathcal{H}_T)$ from Eq.~\eqref{eq:rlvr_objective}. In TreeRPO, each internal node $y$ backs up the average child reward
\begin{equation}
  R(y)=\frac{1}{|\mathcal{C}(y)|}\sum_{c\in\mathcal{C}(y)}R(c).
  \label{eq:app_treerpo_backup}
\end{equation}
For every parent $p$, its child set $\mathcal{C}(p)$ forms a local group, with child advantage
\begin{equation}
  \begin{gathered}
  \hat A_c^{\mathrm{TR}}=
  \frac{R(c)-\mu_p}{\sigma_p},\\
  \mu_p=\frac{1}{|\mathcal{C}(p)|}\sum_{c'\in\mathcal{C}(p)}R(c').
  \end{gathered}
  \label{eq:app_treerpo_advantage}
\end{equation}
Let $\mathcal{E}_{T}$ denote the set of parent-child edges $(p,c)$ in $\mathcal{G}(x)$ used for prefix-level sibling comparisons, and let $o_c$ be the token sequence generated on child branch $c$. TreeRPO applies the same clipped GRPO-style update as Eq.~\eqref{eq:app_grpo_objective}:\par
\begin{equation}
  \mathcal{J}_{\mathrm{TR}}(\theta)=
  \mathbb{E}_{\mathcal{B}_{T}}\Bigg[
  \frac{1}{|\mathcal{E}_{T}|}\sum_{(p,c)\in\mathcal{E}_{T}}
  \frac{1}{|o_c|}\sum_{t=1}^{|o_c|}\bigg(
  \min\Big(\rho_{c,t}\hat A_c^{\mathrm{TR}},
  \bar\rho_{c,t}\hat A_c^{\mathrm{TR}}\Big)
  -\beta D_{\mathrm{KL}}(\pi_{\theta}\Vert\pi_{\mathrm{ref}})\bigg)
  \Bigg],
  \label{eq:app_treerpo_objective}
\end{equation}
where $\rho_{c,t}=\frac{\pi_{\theta}(o_{c,t}\mid p,o_{c,<t})}{\pi_{\theta_{\mathrm{old}}}(o_{c,t}\mid p,o_{c,<t})}$.

Tree-GRPO normalizes terminal outcome rewards within an intra-tree group (sibling branches sharing the same parent prefix) and an inter-tree group (all complete trajectories for the same prompt). Let $\{y_i\}_{i=1}^{G}$ denote the $G$ complete trajectories produced by tree search for prompt $x$, with terminal rewards $R(y_i)=r(\mathcal{H}_{T}^{(i)})$. Their combined advantage is
\begin{equation}
  \begin{gathered}
  \hat A_i^{\mathrm{intra/inter}}=
  \frac{R(y_i)-\mu_{\mathrm{intra/inter}}}{\sigma_{\mathrm{intra/inter}}},\\
  \hat A_i^{\mathrm{TG}}=\hat A_i^{\mathrm{intra}}+\hat A_i^{\mathrm{inter}}.
  \end{gathered}
  \label{eq:app_treegrpo_advantage}
\end{equation}
Tree-GRPO then applies the same clipped token-level update as GRPO:\par
\begin{equation}
  \mathcal{J}_{\mathrm{TG}}(\theta)
  =
  \mathbb{E}_{\mathcal{B}_{T}}
  \Bigg[\frac{1}{G}\sum_{i=1}^{G}\frac{1}{|y_i|}\sum_{t=1}^{|y_i|}\bigg(
  \min\Big(\rho_{i,t}\hat A_i^{\mathrm{TG}},
  \bar\rho_{i,t}\hat A_i^{\mathrm{TG}}\Big)
  -\beta D_{\mathrm{KL}}(\pi_{\theta}\Vert\pi_{\mathrm{ref}})\bigg)\Bigg],
  \label{eq:app_treegrpo_objective}
\end{equation}
where $\rho_{i,t}=\frac{\pi_{\theta}(y_{i,t}\mid x,y_{i,<t})}{\pi_{\theta_{\mathrm{old}}}(y_{i,t}\mid x,y_{i,<t})}$.
In all cases, TreePO denotes the same tree-aware optimizer family with random root and prefix allocation, while TRACE changes only the allocation policy that decides where tree branches are collected.

\subsection{Rollout Construction and Budget Accounting}
\label{app:init_expand}

All methods draw from the same task-dependent candidate pool, but they consume it differently. For the fixed-size prompt baselines, Multi-Hop QA selects 256 prompts from 512 candidates, while Mathematical Reasoning (DeepScaler) and Function Calling (BFCL v4) select 128 prompts from 256 candidates. GRPO selects this subset uniformly, and PCL selects it with its difficulty predictor. Each selected prompt then receives 8 flat rollouts. TreePO uses the same selected-prompt count and total budget, but collects a random tree by sampling 4 bare rollouts per selected prompt and randomly choosing two branch points in each bare rollout.

TRACE does not first commit to a fixed number of selected prompts. It solves Stage~1 over the full candidate pool and assigns integer root counts $m_i$, where $m_i=0$ skips a candidate and $m_i>0$ determines how many bare rollouts are generated. Figure~\ref{fig:bfcl_stage1_mi_summary} shows that the resulting active-prompt fraction does not exceed the fixed-size prompt baselines, so TRACE does not gain an advantage from exposure to more distinct prompts. In Multi-Hop QA we use $M=1024$ root rollout budget units and $N=2$ continuation budget units per active root; in Mathematical Reasoning (DeepScaler) and Function Calling (BFCL v4) we use $M=512$ and $N=2$. After the $m_i$ bare rollouts in Eq.~\eqref{eq:bare_rollout_generation} finish, each trajectory induces visited prefix histories $\mathcal{H}_{i,j,t}$ for $1\le t<T_{i,j}$. Stage~2 allocates $m_iN$ continuation slots over these prefix occurrences and samples the additional suffixes in Eq.~\eqref{eq:prefix_branch_generation}. We use one expansion round in the main experiments, so newly generated suffixes are evaluated as terminal branches and are not recursively expanded.

For budget accounting, we count a root rollout as one trajectory unit and approximate a continuation from an internal prefix as one half trajectory in expectation~\citep{ji2025tree,hou2025treerl}. Thus, ignoring rare fallback cases, TRACE uses approximately $M(1+N/2)$ trajectory units and is matched to a flat baseline with $P$ selected prompts and $G$ rollouts per prompt by choosing $(M,N)$ such that $M(1+N/2)\approx PG$. If no valid prefix candidate is available for a prompt, we convert the local continuation budget into additional root rollouts. In implementation, Stage~2 can be triggered per prompt as soon as that prompt's bare rollouts finish, which avoids a batch-wide straggler barrier while preserving the logical two-stage semantics.

\subsection{Prefix Predictor Supervision and Implementation}
\label{app:value_impl}

The prefix predictor $\tilde V_\psi$ is implemented as a Qwen3-0.6B critic model, extending the lightweight value-model setup used by PCL~\citep{gao2025prompt,qu2026small} from prompt roots to serialized prefix histories. For each queried anchor, the input is the serialized prompt and the interaction history up to that prompt root or prefix. The predictor outputs a scalar success score in $[0,1]$, which is used by both Stage~1 root allocation and Stage~2 prefix allocation. For multiple prefixes under the same prompt, prefix-cache reuse amortizes the shared prompt and early-history computation, keeping the additional scoring overhead small.

Following Eqs.~\eqref{eq:node_value_recursive}--\eqref{eq:value_loss}, the predictor is updated online after each rollout step by fitting scalar tree-backed success targets with an MSE value-model objective. Each update uses the current step's prompt-root examples together with a small stream of prefix examples, with prefix samples accounting for 6\% of the predictor batch. Predictor learning rates are $3\mathrm{e}{-5}$ for Multi-Hop QA and Function Calling (BFCL v4), and $1.5\mathrm{e}{-5}$ for Mathematical Reasoning (DeepScaler). We train for two predictor epochs per rollout on Multi-Hop QA and Function Calling (BFCL v4), and four epochs per rollout on Mathematical Reasoning (DeepScaler). Because tree-backed targets are refreshed as the policy improves, the predictor learns under a non-stationary target distribution and thus faces the stability--plasticity dilemma between adapting to new targets and mitigating catastrophic forgetting. As future work, continual-learning techniques could better manage this dilemma~\citep{kirkpatrick2017overcoming,li2017learning,zou2025structural,zou2026flycl,li2026enhancing}, and parameter-efficient fine-tuning~\citep{hu2022lora,zou2025flylora} could reduce update overhead without sacrificing ranking quality.

\section{Data Examples}
\label{appsec:dataexample}

This section shows representative examples from the actual rLLM data and agent interfaces used by our training scripts.  We include the system message seen by the agent, including the tool schemas injected by \texttt{ToolAgent}.

\begingroup
\let\tiny\footnotesize
\let\scriptsize\footnotesize

\begin{tcolorbox}[example, title=Mathematical Reasoning (DeepScaler) Example]
\textbf{System prompt:}

{\tiny\ttfamily\raggedright You are a math assistant that can write python to solve math problems.\par \par \# Tools\par \par You may call one or more functions to assist with the user query.\par \par You are provided with function signatures within \textless{}tools\textgreater{}\textless{}/tools\textgreater{} XML tags:\par \textless{}tools\textgreater{}\par [\par   \{\par     "type": "function",\par     "function": \{\par       "name": "python",\par       "description": "Execute Python code in a sandboxed environment. Returns results and standard output/error.",\par       "parameters": \{\par         "type": "object",\par         "properties": \{\par           "code": \{\par             "type": "string",\par             "description": "Execute Python code in a sandboxed environment. Returns results and standard output/error."\par           \},\par           "timeout": \{\par             "type": "integer",\par             "description": "Maximum execution time in seconds before timing out",\par             "default": 12\par           \}\par         \},\par         "required": [\par           "code"\par         ]\par       \}\par     \}\par   \}\par ]\par \textless{}/tools\textgreater{}\par \par For each function call, return a json object with function name and arguments within \textless{}tool\_call\textgreater{}\textless{}/tool\_call\textgreater{} XML tags:\par \textless{}tool\_call\textgreater{}\par \{"name": \textless{}function-name\textgreater{}, "arguments": \textless{}args-json-object\textgreater{}\}\par \textless{}/tool\_call\textgreater{}\par}

\medskip
\textbf{Prompt:}

{\scriptsize\ttfamily\raggedright The operation \$\textbackslash{}otimes\$ is defined for all nonzero numbers by \$a \textbackslash{}otimes b = \textbackslash{}frac\{a\textasciicircum{}\{2\}\}\{b\}\$. \par Determine \$[(1 \textbackslash{}otimes 2) \textbackslash{}otimes 3] - [1 \textbackslash{}otimes (2 \textbackslash{}otimes 3)]\$.\par}

\medskip
\textbf{Ground-Truth Answer:}

{\scriptsize\ttfamily\raggedright -\textbackslash{}frac\{2\}\{3\}\par}
\end{tcolorbox}

\begin{tcolorbox}[example, title=Multi-Hop QA (HotpotQA) Example]
\textbf{System prompt:}

{\tiny\ttfamily\raggedright You are a helpful AI assistant that can search for information to answer questions accurately.\par \par When answering questions:\par 1. Use the available search tools to find relevant and reliable information\par 2. Synthesize information from multiple sources when needed\par 3. Provide accurate and comprehensive answers based on your search results\par 4. Always put your final answer in \textbackslash{}boxed\{\} format\par \par For example:\par - If the answer is "American", write: \textbackslash{}boxed\{American\}\par - If the answer is "yes", write: \textbackslash{}boxed\{yes\}\par - If the answer is a year like "1985", write: \textbackslash{}boxed\{1985\}\par \par Remember to search thoroughly and provide your final answer clearly within the \textbackslash{}boxed\{\} format.\par \par \# Tools\par \par You may call one or more functions to assist with the user query.\par \par You are provided with function signatures within \textless{}tools\textgreater{}\textless{}/tools\textgreater{} XML tags:\par \textless{}tools\textgreater{}\par [\par   \{\par     "type": "function",\par     "function": \{\par       "name": "local\_search",\par       "description": "Search for information using a dense retrieval server with Wikipedia corpus",\par       "parameters": \{\par         "type": "object",\par         "properties": \{\par           "query": \{\par             "type": "string",\par             "description": "Search query to retrieve relevant documents"\par           \},\par           "top\_k": \{\par             "type": "integer",\par             "description": "Number of results to return (default: 10)",\par             "minimum": 1,\par             "maximum": 50\par           \}\par         \},\par         "required": [\par           "query"\par         ]\par       \}\par     \}\par   \}\par ]\par \textless{}/tools\textgreater{}\par \par For each function call, return a json object with function name and arguments within \textless{}tool\_call\textgreater{}\textless{}/tool\_call\textgreater{} XML tags:\par \textless{}tool\_call\textgreater{}\par \{"name": \textless{}function-name\textgreater{}, "arguments": \textless{}args-json-object\textgreater{}\}\par \textless{}/tool\_call\textgreater{}\par}

\medskip
\textbf{Prompt:}

{\scriptsize\ttfamily\raggedright Were Scott Derrickson and Ed Wood of the same nationality?\par}

\medskip
\textbf{Ground-Truth Answer:}

{\scriptsize\ttfamily\raggedright yes\par}
\end{tcolorbox}

\begin{tcolorbox}[example, title=Function Calling (BFCL v4) Example]
\textbf{System prompt:}

{\tiny\ttfamily\raggedright \# Tools\par \par You may call one or more functions to assist with the user query.\par \par You are provided with function signatures within \textless{}tools\textgreater{}\textless{}/tools\textgreater{} XML tags:\par \textless{}tools\textgreater{}\par Important: Always use only the latest tool list provided, ignoring any functions mentioned in previous messages.\par \{"name": "add\_to\_watchlist", "description": "This tool belongs to the trading system, which allows \par users to trade stocks, manage their account, and view stock information. Tool description: Add a \par stock to the watchlist.", "parameters": \{"type": "dict", "properties": \{"stock": \{"type": "string", \par "description": "the stock symbol to add to the watchlist. "\}\}, "required": ["stock"]\}, "response": \par \{"type": "dict", "properties": \{"watchlist": \{"type": "array", "description": "the watchlist.", \par "items": \{"type": "string"\}\}\}\}\}\par \{"name": "cancel\_order", "description": "This tool belongs to the trading system, which allows users\par  to trade stocks, manage their account, and view stock information. Tool description: Cancel an \par order.", "parameters": \{"type": "dict", "properties": \{"order\_id": \{"type": "integer", \par "description": "ID of the order to cancel. "\}\}, "required": ["order\_id"]\}, "response": \{"type": \par "dict", "properties": \{"order\_id": \{"type": "integer", "description": "ID of the cancelled order."\},\par  "status": \{"type": "string", "description": "New status of the order after cancellation \par attempt."\}\}\}\}\par \{"name": "filter\_stocks\_by\_price", "description": "This tool belongs to the trading system, which \par allows users to trade stocks, manage their account, and view stock information. Tool description: \par Filter stocks based on a price range.", "parameters": \{"type": "dict", "properties": \{"stocks": \par \{"type": "array", "items": \{"type": "string"\}, "description": "List of stock symbols to filter."\}, \par "min\_price": \{"type": "float", "description": "Minimum stock price."\}, "max\_price": \{"type": \par "float", "description": "Maximum stock price. "\}\}, "required": ["stocks", "min\_price", \par "max\_price"]\}, "response": \{"type": "dict", "properties": \{"filtered\_stocks": \{"type": "array", \par "description": "Filtered list of stock symbols within the price range.", "items": \{"type": \par "string"\}\}\}\}\}\par \{"name": "fund\_account", "description": "This tool belongs to the trading system, which allows users\par  to trade stocks, manage their account, and view stock information. Tool description: Fund the \par account with the specified amount.", "parameters": \{"type": "dict", "properties": \{"amount": \par \{"type": "float", "description": "Amount to fund the account with. "\}\}, "required": ["amount"]\}, \par "response": \{"type": "dict", "properties": \{"status": \{"type": "string", "description": "Status of \par the funding operation."\}, "new\_balance": \{"type": "float", "description": "Updated account balance \par after funding."\}\}\}\}\par \{"name": "get\_account\_info", "description": "This tool belongs to the trading system, which allows \par users to trade stocks, manage their account, and view stock information. Tool description: Get \par account information.", "parameters": \{"type": "dict", "properties": \{\}, "required": []\}, "response":\par  \{"type": "dict", "properties": \{"account\_id": \{"type": "integer", "description": "ID of the \par account."\}, "balance": \{"type": "float", "description": "Current balance of the account."\}, \par "binding\_card": \{"type": "integer", "description": "Card number associated with the account."\}\}\}\}\par \{"name": "get\_available\_stocks", "description": "This tool belongs to the trading system, which \par allows users to trade stocks, manage their account, and view stock information. Tool description: \par Get a list of stock symbols in the given sector.", "parameters": \{"type": "dict", "properties": \par \{"sector": \{"type": "string", "description": "The sector to retrieve stocks from (e.g., \par 'Technology'). "\}\}, "required": ["sector"]\}, "response": \{"type": "dict", "properties": \par \{"stock\_list": \{"type": "array", "description": "List of stock symbols in the specified sector.", \par "items": \{"type": "string"\}\}\}\}\}\par \{"name": "get\_current\_time", "description": "This tool belongs to the trading system, which allows \par users to trade stocks, manage their account, and view stock information. Tool description: Get the \par current time.", "parameters": \{"type": "dict", "properties": \{\}, "required": []\}, "response": \par \{"type": "dict", "properties": \{"current\_time": \{"type": "string", "description": "Current time in \par HH:MM AM/PM format."\}\}\}\}\par \{"name": "get\_order\_details", "description": "This tool belongs to the trading system, which allows \par users to trade stocks, manage their account, and view stock information. Tool description: Get the \par details of an order.", "parameters": \{"type": "dict", "properties": \{"order\_id": \{"type": "integer",\par  "description": "ID of the order. "\}\}, "required": ["order\_id"]\}, "response": \{"type": "dict", \par "properties": \{"id": \{"type": "integer", "description": "ID of the order."\}, "order\_type": \{"type": \par "string", "description": "Type of the order."\}, "symbol": \{"type": "string", "description": "Symbol \par of the stock in the order."\}, "price": \{"type": "float", "description": "Price at which the order \par was placed."\}, "amount": \{"type": "integer", "description": "Number of shares in the order."\}, \par "status": \{"type": "string", "description": "Current status of the order. [Enum]: [\textbackslash{}"Open\textbackslash{}", \par \textbackslash{}"Pending\textbackslash{}", \textbackslash{}"Completed\textbackslash{}", \textbackslash{}"Cancelled\textbackslash{}"]"\}\}\}\}\par \{"name": "get\_order\_history", "description": "This tool belongs to the trading system, which allows \par users to trade stocks, manage their account, and view stock information. Tool description: Get the \par stock order ID history.", "parameters": \{"type": "dict", "properties": \{\}, "required": []\}, \par "response": \{"type": "dict", "properties": \{"order\_history": \{"type": "array", "description": "List \par of orders ID in the order history.", "items": \{"type": "integer"\}\}\}\}\}\par \{"name": "get\_stock\_info", "description": "This tool belongs to the trading system, which allows \par users to trade stocks, manage their account, and view stock information. Tool description: Get the \par details of a stock.", "parameters": \{"type": "dict", "properties": \{"symbol": \{"type": "string", \par "description": "Symbol that uniquely identifies the stock. "\}\}, "required": ["symbol"]\}, "response":\par  \{"type": "dict", "properties": \{"price": \{"type": "float", "description": "Current price of the \par stock."\}, "percent\_change": \{"type": "float", "description": "Percentage change in stock price."\}, \par "volume": \{"type": "float", "description": "Trading volume of the stock."\}, "MA(5)": \{"type": \par "float", "description": "5-day Moving Average of the stock."\}, "MA(20)": \{"type": "float", \par "description": "20-day Moving Average of the stock."\}\}\}\}\par \{"name": "get\_symbol\_by\_name", "description": "This tool belongs to the trading system, which allows\par  users to trade stocks, manage their account, and view stock information. Tool description: Get the \par symbol of a stock by company name.", "parameters": \{"type": "dict", "properties": \{"name": \{"type": \par "string", "description": "Name of the company. "\}\}, "required": ["name"]\}, "response": \{"type": \par "dict", "properties": \{"symbol": \{"type": "string", "description": "Symbol of the stock or \textbackslash{}"Stock \par not found\textbackslash{}" if not available."\}\}\}\}\par \{"name": "get\_transaction\_history", "description": "This tool belongs to the trading system, which \par allows users to trade stocks, manage their account, and view stock information. Tool description: \par Get the transaction history within a specified date range.", "parameters": \{"type": "dict", \par "properties": \{"start\_date": \{"type": "string", "description": "Start date for the history (format: \par 'YYYY-MM-DD').", "default": "None"\}, "end\_date": \{"type": "string", "description": "End date for the\par  history (format: 'YYYY-MM-DD'). ", "default": "None"\}\}, "required": []\}, "response": \{"type": \par "dict", "properties": \{"transaction\_history": \{"type": "array", "description": "List of transactions\par  within the specified date range.", "items": \{"type": "dict", "properties": \{"type": \{"type": \par "string", "description": "Type of transaction. [Enum]: [\textbackslash{}"deposit\textbackslash{}", \textbackslash{}"withdrawal\textbackslash{}"]"\}, "amount": \par \{"type": "float", "description": "Amount involved in the transaction."\}, "timestamp": \{"type": \par "string", "description": "Timestamp of the transaction, formatted as 'YYYY-MM-DD HH:MM:SS'."\}\}\}\}\}\}\}\par \{"name": "get\_watchlist", "description": "This tool belongs to the trading system, which allows \par users to trade stocks, manage their account, and view stock information. Tool description: Get the \par watchlist.", "parameters": \{"type": "dict", "properties": \{\}, "required": []\}, "response": \{"type": \par "dict", "properties": \{"watchlist": \{"type": "array", "description": "List of stock symbols in the \par watchlist.", "items": \{"type": "string"\}\}\}\}\}\par \{"name": "notify\_price\_change", "description": "This tool belongs to the trading system, which \par allows users to trade stocks, manage their account, and view stock information. Tool description: \par Notify if there is a significant price change in the stocks.", "parameters": \{"type": "dict", \par "properties": \{"stocks": \{"type": "array", "items": \{"type": "string"\}, "description": "List of \par stock symbols to check."\}, "threshold": \{"type": "float", "description": "Percentage change \par threshold to trigger a notification. "\}\}, "required": ["stocks", "threshold"]\}, "response": \{"type":\par  "dict", "properties": \{"notification": \{"type": "string", "description": "Notification message \par about the price changes."\}\}\}\}\par \{"name": "place\_order", "description": "This tool belongs to the trading system, which allows users \par to trade stocks, manage their account, and view stock information. Tool description: Place an \par order.", "parameters": \{"type": "dict", "properties": \{"order\_type": \{"type": "string", \par "description": "Type of the order (Buy/Sell)."\}, "symbol": \{"type": "string", "description": "Symbol\par  of the stock to trade."\}, "price": \{"type": "float", "description": "Price at which to place the \par order."\}, "amount": \{"type": "integer", "description": "Number of shares to trade. "\}\}, "required": \par ["order\_type", "symbol", "price", "amount"]\}, "response": \{"type": "dict", "properties": \par \{"order\_id": \{"type": "integer", "description": "ID of the newly placed order."\}, "order\_type": \par \{"type": "string", "description": "Type of the order (Buy/Sell)."\}, "status": \{"type": "string", \par "description": "Initial status of the order."\}, "price": \{"type": "float", "description": "Price at \par which the order was placed."\}, "amount": \{"type": "integer", "description": "Number of shares in the\par  order."\}\}\}\}\par \{"name": "remove\_stock\_from\_watchlist", "description": "This tool belongs to the trading system, \par which allows users to trade stocks, manage their account, and view stock information. Tool \par description: Remove a stock from the watchlist.", "parameters": \{"type": "dict", "properties": \par \{"symbol": \{"type": "string", "description": "Symbol of the stock to remove. "\}\}, "required": \par ["symbol"]\}, "response": \{"type": "dict", "properties": \{"status": \{"type": "string", "description":\par  "Status of the removal operation."\}\}\}\}\par \{"name": "trading\_get\_login\_status", "description": "This tool belongs to the trading system, which \par allows users to trade stocks, manage their account, and view stock information. Tool description: \par Get the login status.", "parameters": \{"type": "dict", "properties": \{\}, "required": []\}, \par "response": \{"type": "dict", "properties": \{"status": \{"type": "boolean", "description": "Login \par status."\}\}\}\}\par \{"name": "trading\_login", "description": "This tool belongs to the trading system, which allows \par users to trade stocks, manage their account, and view stock information. Tool description: Handle \par user login.", "parameters": \{"type": "dict", "properties": \{"username": \{"type": "string", \par "description": "Username for authentication."\}, "password": \{"type": "string", "description": \par "Password for authentication. "\}\}, "required": ["username", "password"]\}, "response": \{"type": \par "dict", "properties": \{"status": \{"type": "string", "description": "Login status message."\}\}\}\}\par \{"name": "trading\_logout", "description": "This tool belongs to the trading system, which allows \par users to trade stocks, manage their account, and view stock information. Tool description: Handle \par user logout for trading system.", "parameters": \{"type": "dict", "properties": \{\}, "required": []\}, \par "response": \{"type": "dict", "properties": \{"status": \{"type": "string", "description": "Logout \par status message."\}\}\}\}\par \{"name": "withdraw\_funds", "description": "This tool belongs to the trading system, which allows \par users to trade stocks, manage their account, and view stock information. Tool description: Withdraw \par funds from the account balance.", "parameters": \{"type": "dict", "properties": \{"amount": \{"type": \par "float", "description": "Amount to withdraw from the account. "\}\}, "required": ["amount"]\}, \par "response": \{"type": "dict", "properties": \{"status": \{"type": "string", "description": "Status of \par the transaction."\}, "new\_balance": \{"type": "float", "description": "Updated account balance after \par the transaction."\}\}\}\}\par \textless{}/tools\textgreater{}\par \par For each function call, return a json object with function name and arguments within \textless{}tool\_call\textgreater{}\textless{}/tool\_call\textgreater{} XML tags:\par \textless{}tool\_call\textgreater{}\par \{"name": \textless{}function-name\textgreater{}, "arguments": \textless{}args-json-object\textgreater{}\}\par \textless{}/tool\_call\textgreater{}\par}

\medskip
\textbf{Prompt:}

{\scriptsize\ttfamily\raggedright Turn 1: Can you provide the latest trading details for Quasar Ltd.?\par Turn 2: Furthermore, I'd like to delve into the technology sector. Would you be able to compile a \par comprehensive list of stock symbols pertinent to this industry and add all of them to my watchlist\par}

\medskip
\textbf{Ground-Truth API sequence:}

{\scriptsize\ttfamily\raggedright [\par   [\par     "get\_stock\_info(symbol='QUAS')"\par   ],\par   [\par     "get\_available\_stocks('Technology')",\par     "add\_to\_watchlist(stock='AAPL')",\par     "add\_to\_watchlist(stock='GOOG')",\par     "add\_to\_watchlist(stock='MSFT')"\par   ]\par ]\par}
\end{tcolorbox}
\endgroup

\end{document}